\newtheorem{proposition}{Proposition}
\theoremstyle{definition}
\newtheorem{assumption}{Assumption}
\newtheorem*{problem*}{Problem}
\title{\LARGE \bf
Distributed Gaussian Process Mapping for Robot Teams with Time-varying Communication
}
\author{James Di$^{1}$, Ehsan Zobeidi$^{2}$, Alec Koppel$^{3}$, Nikolay Atanasov$^{2}$% <-this % stops a space
\thanks{$^{1}$ James Di is with ByteDance Inc, Mountain View, CA 94041 USA {\tt\small yubai.di@bytedance.com}.}%
\thanks{$^{2}$ Ehsan Zobeidi and Nikolay Atanasov are with Department of Electrical and Computer Engineering, University of California San Diego, La Jolla, CA 92093 USA {\tt\small \{ezobeidi, natanasov\}@ucsd.edu}.}%
\thanks{$^{3}$ Alec Koppel is with Supply Chain Optimization Technologies, Amazon, Bellevue, WA 98004 USA {\tt\small aekoppel@amazon.com}. Work completed while at the U.S. Army Research Laboratory in Adelphi, MD 20783}
}
\begin{document}
\maketitle
\thispagestyle{empty}
\pagestyle{empty}

%%%%%%%%%%%%%%%%%%%%%%%%%%%%%%%%%%%%%%%%%%%%%%%%%%%%%%%%%%%%%%%%%%%%%%%%%%%%%%%%
\begin{abstract}
Multi-agent mapping is a fundamentally important capability for autonomous robot task coordination and execution in complex environments.
While successful algorithms have been proposed for mapping using individual platforms, cooperative online mapping for team of robots remains largely a challenge.
% in a complex environment, that should be explored by a team of robots to achieve efficient fast mapping. However, the convergence of multi-agent tasks remains a challenge in comparison to the centralized mapping. 
% The problem becomes more apparent when the communication network of robots is time varying due to robots' mobility and challenging terrains. 
We focus on probabilistic variants of mapping due to its potential utility in down-stream tasks such as uncertainty-aware path-planning. A critical question to enabling this capability is how to process and aggregate incrementally observed local information among individual platforms, especially when their ability to communicate is intermittent. We put forth an Incremental Sparse Gaussian Process (GP) methodology for multi-robot mapping, where the regression is over a truncated signed-distance field (TSDF). Doing so permits each robot in the network to track a local estimate of a pseudo-point approximation GP posterior and perform weighted averaging of its parameters with those of its (possibly time-varying) set of neighbors. We establish conditions on the pseudo-point representation, as well as communications protocol, such that robots' local GPs converge to the one with globally aggregated information. We further provide experiments that corroborate our theoretical findings for probabilistic multi-robot mapping.
%Multi-agent mapping offers a promising research direction to use robot teams, for example UAVs to explore large-scale environments. However, coordination and merging of information within the robot team becomes challenging as the robots move out of range, or become unavailable for communication. In this paper, we consider using Incremental Sparse Gaussian Process to map the environment, derive a distributed protocol to exchange information and update parameters over a time-varying graph of the robots. We demonstrate our algorithm on a large-scale dataset with multiple sequences, and show that the estimation of the robots converge to a centralized estimator where all robots information are collected. 
\end{abstract}

%%%%%%%%%%%%%%%%%%%%%%%%%%%%%%%%%%%%%%%%%%%%%%%%%%%%%%%%%%%%%%%%%%%%%%%%%%%%%%%%%%%%%%%%%%%%%%%%%%%%%%%%%%%%%%%%%%%%%%%%%%
%%%%%%%%%%%%%%%%%%%%%%%%%%%%%%%%%%%%%%%%%%%%%%%%%%%%%%%%%%%%%%%%%%%%%%%%%%%%%%%%
\section{Introduction}

%% Introduction on SLAM, Signed Distance Field, Octree, etc. 

%% TODO: Discuss localization, and how localization and mapping can be synthesized together. 

\begin{figure}[t]
    \centering
    \includegraphics[width=\linewidth, trim={5cm, 3cm, 2cm, 4cm}, clip]{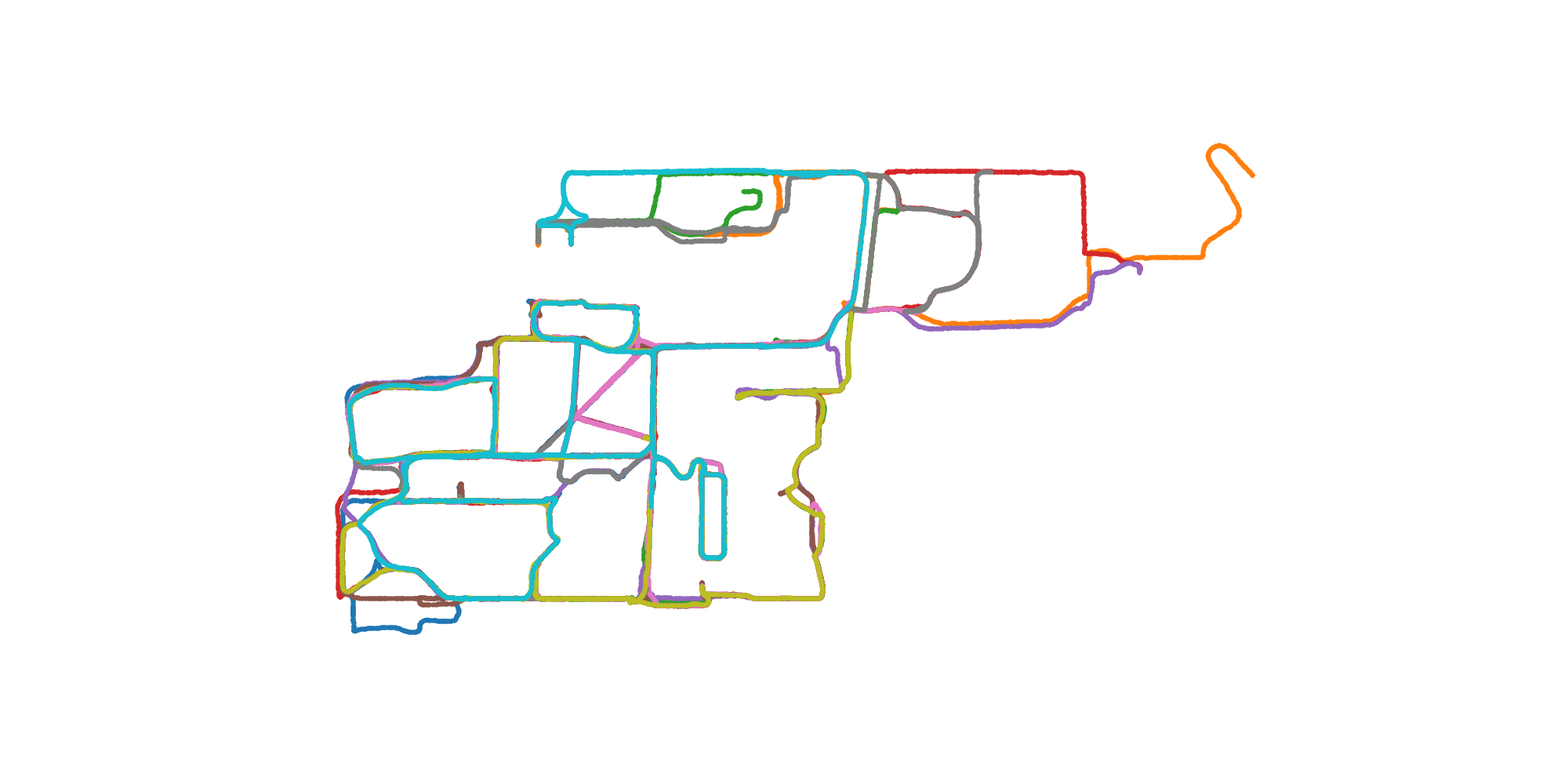}\\
    \includegraphics[width=\linewidth, trim={5cm, 4cm, 2cm, 4cm}, clip]{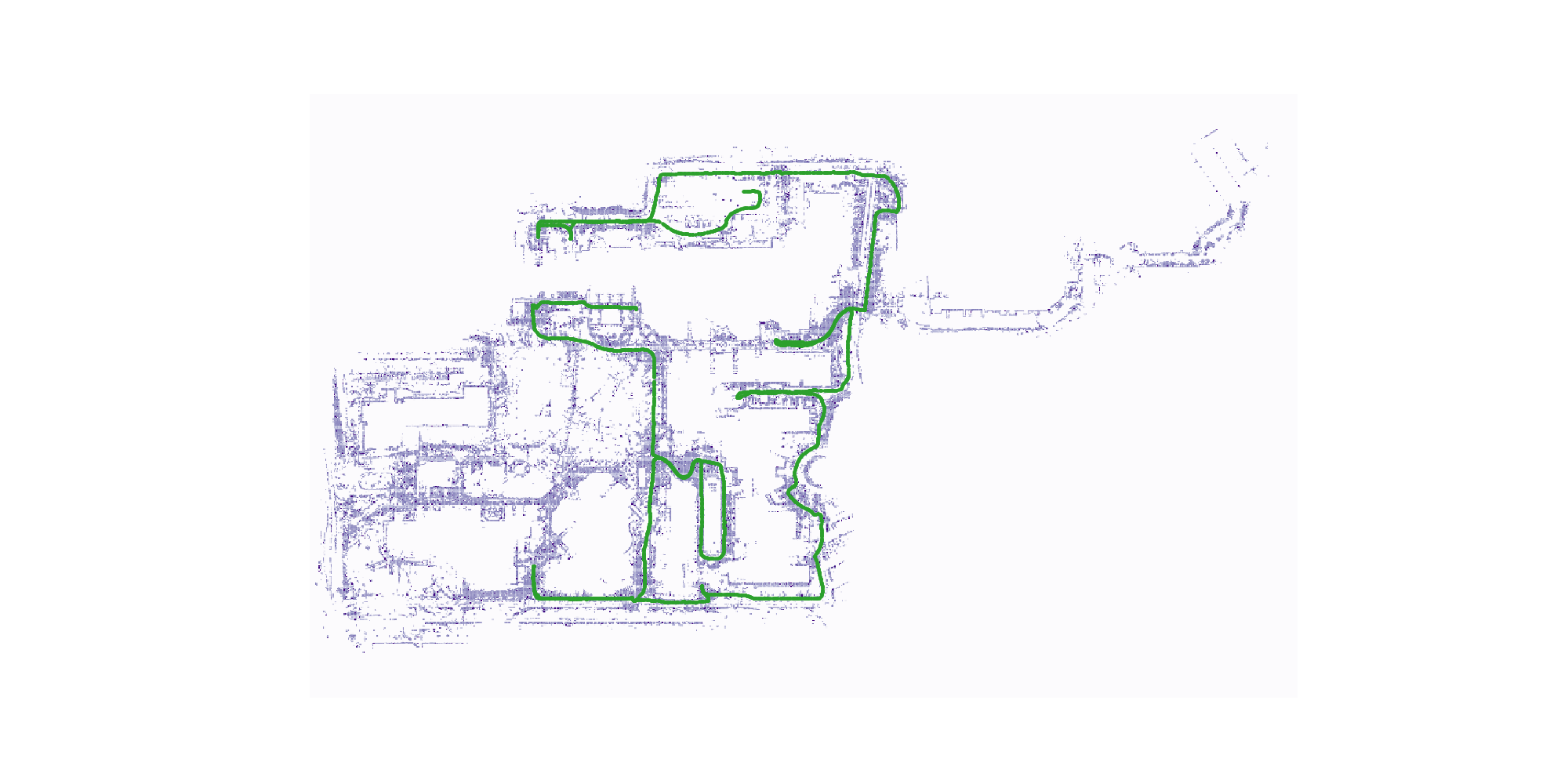}
    \caption{This work tackles probabilistic continuous-space mapping using point-cloud observations from a robot team with time-varying communication. The trajectories of $10$ sequences from the NCLT dataset \cite{nclt}, representing different robots, are shown at the top. The dataset provides 2D LiDAR scans and other sensor data collected in a square kilometer region of a university campus over the span of several months. In our setting, the robot communication is intermittent as robots move in and out of communication range. Each robot collaborates with its teammates to reconstruct a probabilistic truncated signed distance field (TSDF) map of the environment. The TSDF map and trajectory of robot 3 is shown at the bottom. Even though robot $3$ does not visit all parts of the environment, it obtains a complete TSDF map of the environment.}
    %Robots' ground-truth trajectories in the NCLT dataset \cite{nclt}, and agent 3's processed TSDF map using our proposed  method, when 10 robots are simultaneously exploring the environment. The dataset provides both 2D and 3D LiDAR scans, and camera images collected over a large-scale university campus(roughly one square kilometer), over the time span of several months. In our setting, communication is intermittent as the robots move in/out of the communication range. Bottom figure is agent 3's final TSDF estimate, using 2D LiDAR scan and the robots' ground-truth trajectories.}
    \label{fig:intro}
\end{figure}

Simultaneous Localization and Mapping (SLAM) refers to the ability of a robot to identify its location within an unknown environment while simultaneously constructing a map of its surroundings. SLAM is critically important to enable real-time robot operation, using only on-board sensing \cite{durrant2006simultaneous,thrun2007simultaneous}. We focus on a distributed mapping problem, where multiple robots acquire sensor data and seek to aggregate it to boost the statistical accuracy of their maps  \cite{doorslam,ccmslam,cunningham2010ddf,cunningham2013ddf}. This setting is important for reducing the amount of time to map an unknown environment at sufficiently high accuracy \cite{nieto2014coordination}. However, collecting and optimizing the data at a central location requires every robot to send all of its observations followed by a large-scale optimization, rendering the process excessively slow, expensive, and brittle in robotics applications with challenging communication infrastructure  \cite{lanbo2008prospects,alexis2019resilient}. Instead of using a centralized architecture, in this work we allow the robots to process their local observations incrementally and communicate in a \emph{distributed} manner. The challenge then becomes mixing the information received at each robot from its neighbors that are within communication range to collaboratively build the map, without knowing the robot network topology a priori.

%from its own neighbors for each agent to collaboratively build the map, without knowing the topology of the robotic network a priori.

In this work, we focus on using truncated signed distance field(TSDF) \cite{kinectfusion} as our environment representation. Compared to occupancy grid  \cite{occupancy-grid}, TSDF quantifies both occupancy and distance to the closest object surface, and is more attractive for down-stream tasks such as collision avoidance and reconstruction. However, updating the global map whenever new observations arrive is expensive. Hence to keep the computational complexity in check, inspired by map decomposition methods such as  Octomap \cite{hornung2013octomap} and voxel-hashing \cite{niessner2013real, voxblox}, we employ the data structure QuadTree(Octree in 3D) to reduce computation complexity, and only update maps of the relevant regions when new sensor observations are collected. 

%% In distributed optimization, information mixing can be achieved via primal methods, dual methods, and primal-dual methods.

Furthermore, we focus in the case that individual platforms seek to infer probabilistic information of their environmental map, as uncertainty quantification is important for down-stream tasks such as safe navigation and collision avoidance. Gaussian Process (GP) is a non-parametric model that provides a natural choice for tracking the posterior of the map while providing uncertainty information  \cite{rasmussen2003gaussian}. However, the training procedure of GP is cubic in the number of samples, and various methods  \cite{bauer2016understanding,koppel2021consistent} have been proposed to strike a balance between computational effort and statistical accuracy in GP inference. We develop an approach based off \emph{pseudo-point} approximations as in   \cite{snelson2006sparse}, which reduces the training complexity from $\mathcal{O}(n^3)$ to $\mathcal{O}(m^2n)$, where $n$ is the number of training samples, $m$ is the number of pseudo-points, and whose convergence has recently been characterized \cite{burt2019rates}. When $m \ll n$ this could lead to significant computational savings. 

% \NA{It might be better to introduce TSDF first and then emphasize in the following paragraph that we develop a weighted averaging scheme for propagating GP TSDF across the network.} 
We further develop a weighted averaging scheme for propagating distributions of TSDF estimated by individual GPs across the network, inspired by consensus protocols \cite{boyd2005gossip,nedic2009distributed}. While sub-optimal compared to approaches based on Lagrangian relaxation of consensus constraints, such as primal-dual method \cite{koppel2015saddle}, dual methods \cite{terelius2011decentralized}, and ADMM  \cite{boyd2011distributed}, the proposed approach is simple and efficient, making it suitable for distributed probabilistic inference. It has an elegance of simplicity that makes it natural to employ in distributed incremental GP inference setting. 

Thus, to achieve distributed, probablistic, online and efficient mapping with uncertainy information, we construct an algorithm based upon incremental sparse Gaussian Processes (GP) with  pseudo-input approximations, which are regressed over sequentially observed TSDF measurements taken by each platform.  Information mixing is executed through parametric representations of the GP parameters. A critical key point of departure of this work from that which preceded it \cite{zobeidi2021dense} is its ability to operate in a time-varying network without stale information. This capability is important in operational settings such as underwater \cite{lanbo2008prospects} and underground \cite{alexis2019resilient}, where communication is intermittent due to environmental effects. In addition, we demonstrate both in theory and practice on two public real-world datasets with multiple sequences that this method yields statistically consistent GP posteriors of the environmental map. In summary, the \emph{contributions} of this paper are to:
\begin{itemize}
    \item develop a distributed protocol for mixing incremental pseudo-points GP posterior of TSDF over a time-varying network,
    \item establish a convergence guarantee under suitable conditions on the pseudo-points, communications network, and input space,
    \item corroborate the proposed algorithm on two real-world LiDAR datasets, one of which is large-scale.
\end{itemize}
We demonstrate both theoretically and empirically that the proposed distributed mapping algorithm in a time-varying communication graph structure converges asymptotically to a centralized estimate, which relies on the information of all robots.

\section{Problem Statement}\label{sec:problem}
We consider the problem of mapping a $d$-dimensional environment ($d \in \{2,3\}$ in practice) with occupied space $\Omega \subset \mathbb{R}^d$ and free space $\mathcal{F} \subset \mathbb{R}^d$. We aim to estimate a truncated signed distance function (TSDF) \cite{curless1996volumetric,niessner2013real} as a continuous representation of the environment. The truncated signed distance from a point $x$ to the occupied space $\Omega$ is defined as:
%We consider the mapping problem\NA{What mapping problem. Introduce things one at a time, clearly and rigorously} for a distributed\NA{The formal meaning of distributed has not been introduced yet.} team of $n$ robots --\NA{A sentence should not end with "--"} Let $X$\NA{What is $X$? Do we need this notation?} be the environment, $\Omega = \cup_{i=1, ..., N}\ \Omega(i)$ be the occupied space, which is the union of $N$ objects\NA{The concept of objects is not important in this paper and should be removed. The environment can be introduced directly as a union of occupied space $\Omega$ and free space.} within the environment, and $\Omega^{C}$ be the complement of $\Omega$ in $X$, i.e. the free space. Let $x_t^{i}$ be the position of the $i$th robot at time t. Assume $d(x, x')$ to be the Euclidean distance metric of the environment\NA{This sentence is not necessary. Euclidean distance has a well-established notation, $\|x-x'\|_2$.}. Let $\Omega_j$ denotes object $j$, and $\partial \Omega_j$ be the boundary of $\Omega_j$. We define \NA{A definition should be more formal. It should not use abbreviations like "TSDF" and should define all terms precisely. For example, "The truncated signed distance from a point $x$ to a set $\Omega$ is defined as:"}  the TSDF $g$ as the following: 
%
\begin{equation}
    g(x) = \begin{cases}
\phantom{-}\min(d(x, \partial \Omega), h),& \text{if }  x \notin \Omega,\\
-\min(d(x, \partial \Omega), h),& \text{if } x \in \Omega,
\end{cases}
\end{equation}
where $h > 0$ is a pre-defined truncation value, $\partial \Omega$ is the boundary of $\Omega$, and:
\begin{equation}
d(x, \partial \Omega) = \inf_{y \in \partial \Omega} \|x-y\|_2.
\end{equation}
The TSDF $g(x)$ provides the (truncated) minimum distance from $x$ to the boundary of the occupied space and is negative if $x$ is within the occupied space.

We employ a team of $n$ robots to gather observations of the environment over a time horizon $0,\ldots,T$. The observation of robot $i$ at time $t$ is a point-cloud $Z_t^i \subset \mathbb{R}^d$ obtained, e.g., from a LiDAR scanner, depth camera, or another range sensor. We assume that the robot positions $p_t^i \in \mathbb{R}^d$ and orientations $R_t^i \in SO(d)$ are known for all $t$ and $i$ from an odometry algorithm, e.g., performing scan-matching \cite{csm} or pose graph optimization \cite{dellaert2017factor}. The world-frame coordinates of a point $z \in Z_t^i$ observed by the robot can be obtained via $p_{t}^{i} + R_{t}^{i} z$.

We assume the the team of robots are able to exchange the information over an undirected time-varying graph $G_t = (\mathcal{V}, \mathcal{E}_t)$ with nodes $\mathcal{V} = \{1,\ldots,n\}$, corresponding to the robots, and edges $\mathcal{E}_t \subseteq \mathcal{V} \times \mathcal{V}$. If two robots $i,j \in \mathcal{V}$ are able to communicate at time $t$, then an edge $(i,j) \in \mathcal{E}_t$ is present in the graph. The robots that robot $i$ can communicate with at time $t$ are called its neighbors and will be denoted by the set $\mathcal{N}_t^i = \{ j \in \mathcal{V} | (i,j) \in \mathcal{E}_t\}$. We aim to design a fully distributed TSDF mapping approach, in which the robots communicate only with their neighbors and place minimal restrictions on the communication structure. We consider time-varying networks, in which the graph $G_t$ may be instantaneously disconnected but the union of the graphs over a period of time $B$ is connected. This assumption is much weaker than requiring the robots to be in constant communication and is utilized for many results in multi-agent coordination and distributed optimization \cite{nedic2015time-varying,nedic2017achieving}.

%denotes the set of robots $1, ..., n$, and $\mathcal{E}_t$ describes the edges at time $t$. Two robots are able to exchange information when there is edge between them. We consider the class of time-varying networks in which \textit{B-connected graph sequence} assumption holds. We formally introduce this assumption as:
%%%%%%%%%%%%%%%%%%%%%%%%%%%%%%%%%%%%%%%%%%%%%%%%%%%%%%%%%%%%%%%%%%%%%%%%%%%%%%%%%%%%%%%%%%%%%%%%%%%%%%%%%%
\begin{assumption}\label{assumption:B}
The graph sequence $G_t=(\mathcal{V},\mathcal{E}_t)$ is \emph{uniformly connected}, i.e., there exists an integer $B>0$ (potentially unknown to the robots) such that the graph with node set $\mathcal{V}$ and edge set $\mathcal{E}_k^B = \bigcup_{t = kB}^{(k+1)B-1}\mathcal{E}_t$ is connected for all $k = 0, 1,\ldots$.
\end{assumption}

For each robot $i$, our goal is to incrementally infer a posterior distribution over the TSDF representation $g$ of the environment, conditioned on the sequential observations $Z^i_t$ of robot $i$ as well as the information received from its neighbors $\mathcal{N}^i_t$ for $t = 0, \ldots, T$. This amounts to an online distributed Bayesian inference problem over a time-varying network. Our approach to this problem is described in the following section.

\section{Technical Approach}\label{sec:algorithm}

We organize our technical approaches into the following sections. Section \ref{single_agent} discusses the TSDF estimation framework for a single agent, leveraging sparse pseudo-point Gaussian Process. Section \ref{multi_agent} presents the distributed update protocol over time-varying graph of robots, with theoretical proposition and proof over the convergence of the algorithm.

\subsection{Regressing TSDF via pseudo-point Gaussian Processes}
\label{single_agent}

To obtain TSDF $g(x)$ over the entire environment for a single agent, we leverage pseudo-point approximations of Gaussian Processes to provide a way to infer the distribution in a parametrically efficient manner. Before doing so, we review the key steps of Gaussian Process and sparse pseudo-points GP in the following sections. %\NA{This suggests that Sec. 3A is a review of existing work. It is best to split Sec. 3 into Sec. 3 (corresponding to Sec. 3A) and Sec. 4 (corresponding to Sec. 3B) to more directly emphasize the new material that this paper contributes. Based on this sentence the reader might feel that the whole paper is an application of \cite{williams2006gaussian}. Alternatively, if there are new ideas in Sec. 3A it should be written in a way to emphasize this, rather than hide them in a review of existing work.}.

{\bf \noindent Gaussian Processes.} A Gaussian Process (GP) $f(x) \sim \mathcal{GP}(\mu(x),  k(x, x'))$ is a stochastic process such that any finite collection of its realizations is jointly Gaussian. It is parameterized by its mean function $\mu(x) $ and a covariance kernel $k(x, x')$, where $\mu(x), k(x, x)'$ are initialized with prior functions $\mu_{0}(x), k_{0}(x, x')$. We employ this nonparametric model to hypothesize that the robot's observations are corrupted by zero-mean Gaussian noise: $y_{i} = f(x_{i}) + \epsilon_{i}$, where the noise prior satisfies $\epsilon _{i} \sim N(0, \sigma_{i}^2)$.  As the robot sensors collect observations $\mathcal{X} = \{(x_i, y_i) \}$, the posterior of the GP is Gaussian and can be written as $f(x) \vert \mathcal{X} \sim \mathcal{GP}(\mu(x), k(x, x'))$ with associated conditional mean and covariance functions:
\begin{equation}
\label{gp}
\begin{aligned}
    \mu(x) & = \mu_{0}(x)\! +\! k_{0}(x, \mathcal{X})(k_{0}(\mathcal{X}, \mathcal{X}) \!+\! \sigma^{2}I)^{-1}(y\! -\! \mu_0(\mathcal{X})) \\
    k(x, x') & = k_{0}(x, x') \!-\! k_{0}(x, \mathcal{X})(k_{0}(\!\mathcal{X}, \mathcal{X}) \!+\! \sigma^{2}I)^{-1}k_{0}(\mathcal{X}, x') \\
    & + \sigma^{2}
\end{aligned}
\end{equation}

{\bf \noindent Complexity Reduction.}
The computational complexity of the training procedure is $O(n^{3})$ with sample size $n=|\mathcal{X}|$, due to the inversion of the kernel matrix $k_{0}(\!\mathcal{X}, \mathcal{X})$ in \eqref{gp}. We adopt an approximation based on a set of pseudo-points  $\mathcal{P} \subset \mathcal{X}$, where $\vert \mathcal{P} \vert = m \ll n$, as in \cite{snelson2006sparse}. The key is that we model the target $y$ to come from a latent function $f$, and we parametrize the model with the pseudo-points $\mathcal{P}$ and their pseudo targets $\bar{f}$. The distribution of targets $y$ conditioned on the locations $X$, $\mathcal{P}$ and $\bar{f}$, i.e. $p(y \vert X, \mathcal{P}, \bar{f})$ is Gaussian with parameters:

\begin{equation}
    \begin{aligned}
        \mu(x) & = \mu_{0}(x)\! +\! k_{0}(x, \mathcal{P})(k_{0}(\mathcal{P}, \mathcal{P}) + \sigma^2 I)^{-1}(y\! -\! \mu_0(\mathcal{P})) \\
        k(x, x') & = k_{0}(x, x') \!-\! k_{0}(x, \mathcal{P})(k_{0}(\mathcal{P}, \mathcal{P}) + \sigma^2 I)^{-1}k_{0}(\mathcal{P}, x') \\
        & + \sigma^{2}
    \end{aligned}
    \label{eq:likelihood}
\end{equation}

Since we can assume the pseudo-targets come from the same distribution as the dataset $\mathcal{X}$, we place the same prior on $\bar{f} \sim N(\mu_0, k_0)$, and after using Bayes Rules on \eqref{eq:likelihood} and the prior we can write $\bar{f} \vert \mathcal{X}, \mathcal{P}$ as:
\begin{equation}
    \begin{aligned}
        \mu(\mathcal{P}) & = \mu_{0}(\mathcal{P}) \!+ k_0(\mathcal{P}, \mathcal{P})(k_{0}(\mathcal{P}, \mathcal{P}) \!+ \Gamma)^{-1} \gamma \\
        \Sigma(\mathcal{P}) & = k_0(\mathcal{P}, \mathcal{P})(k_{0}(\mathcal{P}, \mathcal{P}) \! + \Gamma)^{-1}k_{0}(\mathcal{P}, \mathcal{P})
    \end{aligned}
    \label{eq:pseudo_likelihood}
\end{equation}
with weighting factors $\Gamma = k_{0}(\mathcal{P}, \mathcal{X})(\Lambda + \sigma^2 I)^{-1}k_{0}(\mathcal{X}, \mathcal{P})$, $\Lambda = k_0(\mathcal{X}, \mathcal{X}) - k_{0}(\mathcal{X}, \mathcal{P})k_{0}(\mathcal{P}, \mathcal{P})^{-1} k_{0}(\mathcal{P}, \mathcal{X})$, and $\gamma = k_{0}(\mathcal{P}, \mathcal{X})(\Lambda + \sigma^{2}I)^{-1}(y - \mu_{0}(X))$.

Using the definition of info matrix $\Omega(\mathcal{P}) = \Sigma(\mathcal{P})^{-1}$ and information mean $\omega (\mathcal{P}) = \Omega \mu(\mathcal{P})$, equivalently to \eqref{eq:pseudo_likelihood} the info mean and info matrix of $\mathcal{P}$ can be written as:
\begin{equation}
\begin{aligned}
        \omega(\mathcal{P}) & = \Omega \mu_{0}(\mathcal{P}) + k_{0}(\mathcal{P}, \mathcal{P})^{-1} \gamma \\
        \Omega(\mathcal{P}) & = k_{0}(\mathcal{P}, \mathcal{P})^{-1} (k_{0}(\mathcal{P}, \mathcal{P}) + \Gamma)k_{0}(\mathcal{P}, \mathcal{P})^{-1}
\end{aligned}
\label{eq:spgp_info_mean}
\end{equation}

Finally, integrating out $\bar{f} \vert \mathcal{X}, \mathcal{P}$ as in  \eqref{eq:spgp_info_mean} and $f(x) \vert X, \mathcal{P}, \bar{f}$ as in \eqref{eq:likelihood}, the targets are distributed as Gaussian with parameters:

\begin{equation}
\begin{aligned}
\mu(x) & = \mu_{0}(x) \! + k_{0}(x, \mathcal{P})k_{0}(\mathcal{P}, \mathcal{P})^{-1}(\Omega^{-1}\omega - \mu_{0}(\mathcal{P})) \\
k(x, x') & = k_{0}(x, \mathcal{P})k_{0}(\mathcal{P}, \mathcal{P})^{-1} \Omega^{-1} k_{0}(\mathcal{P},\mathcal{P})^{-1}k_{0}(\mathcal{P}, x') \\
& \!+ k_{0}(x, x') \!  - k_{0}(x, \mathcal{P})k_{0}(\mathcal{P}, \mathcal{P})^{-1}k_{0}(\mathcal{P}, x')
\end{aligned}
\label{eq:sparse_gp}
\end{equation}

This is a key complexity reduction of GP posterior computations. However, it does not address the fact that in incremental settings, the robots may encounter repeated pseudo-points, which is likely to happen as one robot may visit places already covered another, and can cause memory issues. Hence we incorporate the pseudo-point aggregation technique in \cite{zobeidi2020dense}, which is proven to have the same distribution as the pseudo-point GP mean and covariance in \eqref{eq:sparse_gp}. The GP using the aggregated statistics have mean and covariance:
\begin{equation}\label{eq:compressed_sp_gp}
\begin{aligned}
\mu(x) & = \mu_{0}(x) + k_{0}(x, \mathcal{P})Z(\zeta - \mu_0(\mathcal{P})) \\
k(x, x') & = k_{0}(x, x') - k_{0}(x, \mathcal{P})Zk_{0}(\mathcal{P}, x')
\end{aligned}
\end{equation}
where $Z^{-1} \coloneqq k_{0}(\mathcal{P}, \mathcal{P}) + \sigma^{2} \text{diag}(m)^{-1}$. $\mathcal{P} \subset \mathcal{X}$ are the locations of the pseudo-points, and with abuse of notation $\zeta(x)$ is the arithmetic average of the observations for $x \in \mathcal{P}$, and $m(x)$ is the occurrence count of observations for $x \in \mathcal{P}$.

{\bf \noindent Tree Data Structures.} To keep the computational effort under control, for each agent $i$ we use one QuadTree $\textit{Tr}^{i}_t$ (Octree in 3D) to keep track of the pseudo points at time $t$. Let $L^{i, k}_{t}$ denote the $k$th QuadTree $\textit{leaf}$ node for tree $\textit{Tr}^{i}_t$, which may be the root when the tree has not been split, or a child of $\textit{Tr}^{i}_t$. When new pseudo-points are inserted, if the number of pseudo-points in $L^{i,k}_t$ exceeds a threshold \textit{maxLeafSize}, the node $L^{i,k}_t$ is split recursively until all leaves of $L^{i,k}_t$ have less than \textit{maxLeafSize} pseudo-points.

\begin{figure}[t!]
    \centering
     \includegraphics[trim={1cm, 1cm, 2cm, 20
     cm}, clip,width=0.45\textwidth]{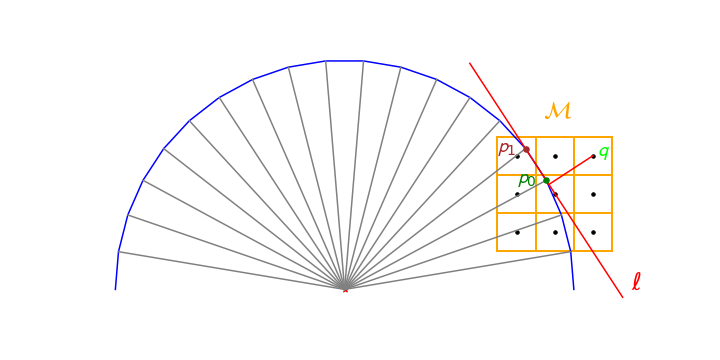}
     \caption{Approximation of  pseudo-points' TSDF in 2D, which is used as training samples (along with pseudo-points locations) for GP. The gray lines are LiDAR rays from the scanner. The blue line represents a object's surface boundary. The green and brown points $p_0, p_1$ are two adjacent laser rays' hit point with the boundary. Pseudo-points (black) are placed over a local frame $\mathcal{M}$ of the point $p_0$. The TSDF of a pseudo-point $q \in \mathcal{P}$ is approximated with the distance to the line $l$ passing the two adjacent laser hit points.}
     \label{fig:tsdf_approx}
     \vspace{-10pt}
\end{figure}

{\bf \noindent Pseudo-point Placement.} To place the pseudo-points in the environment, we adopt a grid-approach, and place a local frame $\mathcal{M}$ over the laser endpoints, as proposed by \cite{zobeidi2020dense}. Since the laser beams are hit where there are obstacles, we can assume the endpoints of the laser beams have $g(x) = 0$.  As illustrated in Fig.\ref{fig:tsdf_approx}, a 3x3 grid is placed over the laser endpoint, and the pseudo-points are selected from each grid point within the frame $\mathcal{M}$ such that pseudo-points of both positive and negative TSDF values are selected. Thus with the observation $Z^i_t$ we locate the set of pseudo points $\tilde{\mathcal{P}}^i_t$.

{\bf \noindent TSDF Approximation in 2D.} For each pseudo-point $q \in \tilde{\mathcal{P}}^i_t$, to compute the TSDF corresponding to the recent observation $Z^i_t$, suppose that  $p_{1}, p_{2} \in X$ are two adjacent laser hit points, and  $\vec{p_1}, \vec{p_2}$ denote the corresponding vectors starting from the origin $O$. We assume that $p_{1}, p_{2}$ belong to the same boundary surface $\partial \Omega_{j}$, and approximate the distance field of a pseudo-point $q \in \mathcal{M}(p_{1})$ as the distance $d(q, \overline{p_1 p_2})$, where $\overline{p_{1} p_{2}}$ denotes the line passing through $p_1 p_2$ in $2D$, or  the plane containing $p_{1}, p_{2}$ in 3D, with the third adjacent laser hit point. In 2D, $d(q, \overline{p_1 p_2})$ can be calculated using the following equation:
%%TODO: Add a figure here for computation in 2D.
\begin{equation}
\label{eq:tsdf_transformation}
    d(q, \overline{p_1 p_2}) = \frac{\lvert (x_2 - x_1)(y_1 - y_0) - (x_1 - x_0)(y_2 - y_1) \rvert}{\sqrt{(x_2 - x_1)^2 + (y_2 - y_1)^2}}
\end{equation}
where $q = (x_0, y_0)$, $p_1 = (x_1, y_1)$, $p_2 = (x_2, y_2)$. We obtain $g(q) = \min(d(q, \overline{p_1 p_2}), h)$ if $q$ is in the same halfspace of robot's position $x_{t}$, or $-\min(d(q, \overline{p_1 p_2}), h)$ otherwise. 

Then we have a set of pseudo points $\Tilde{\mathcal{P}}^i_t$ with their estimated TSDF values corresponding to observation $Z_t^i$. The transformed data are samples of $g$ function that are approximated with Gaussian Process, with mean $\mu(x)$ and covariance $k(x, x)$ defined in \eqref{eq:compressed_sp_gp}.

\subsection{Distributed Updates in Time-Varying Network}

\label{multi_agent}
To make the updates for the GP approximation scoped in the previous subsection executable in a distributed setting, we consider a setting where each robot acquires a local observation stream. Then, it must incorporate message passing with others in order to propagate its local information across the network and achieve a degree of coordination. In the probabilistic mapping setting, we should not expect each robot's local map to coincide, since it has traversed a different path. Cooperation is then defined from the perspective of whether individuals estimates approach that of centralized meta-agent that has access to all robots' local information. To formalize whether agents' parameters approach that of the centralized agent, conditions are required on the network and a ground truth, so as to quantify convergence. In this work we assume \textit{B-connected graph sequence} defined by Assumption~\ref{assumption:B}, and we will prove it leads to convergence.

% \NA{Move this paragraph later. It is not necessary to talk about distances between robots here because $W_t$ is well defined just based on $\mathcal{E}_t$. However, how the set $\mathcal{E}_t$ is formed in the experiments should be explained in Sec. 4.} 
Let $W_t$ be the weight matrix of the network at time $t$. We assume that $[W_t]_{ii}>0$ for all $i \in V$. At each time step, the edge set $\mathcal{E}_t$ is determined via robots' distance from each other. When two robots are closer than a distance threshold, there is an edge between them; otherwise, they are not connected. Given the communication graph $G_t$, a practical weight matrix that is widely used in distributed literature is Metropolis weights \cite{boyd2005gossip} \cite{nedic2017achieving}, and is defined as:
\begin{equation}
\label{weight_matrix}
[W_t]_{i,j} = \begin{cases}
      \frac{1}{(1+\max(\deg^i_t,\deg^j_t))}, & \text{if } $(i,j)$ \in \mathcal{E}_t,\\
      0, & \text{if } $(i,j)$ \notin \mathcal{E}_t,\\
      1-\sum_{j \in Ne^i_t}[W_t]_{i,j}, & \text{if } $i = j$, 
      \end{cases}
\end{equation}
where $deg^i_t$ is the number of neighbors for robot $i$ at $t$. As established in \cite{nedic2017achieving}, one important property of Metropolis weights, is that Assumption~\ref{assumption:B} leads to $\Pi_{t=\tau}^\infty W^t  \to \frac{\bold{1} \bold{1}^\top}{n}$. More precisely, it has uniform stationary distribution, for any starting time $\tau$.

%In this work, we assume \textit{B-connected graph sequence}, and formalize the notion of a centralized agent that has access to all information across the network.
%%%%%%%%%%%%%%%%%%%%%%%%%%%%%%%%%%%%%%%%%%%%%%%%%%%%%%%%%%%%%%%%%%%%%%%%%%%%%%%%%%%%%%%%%%%%%%%%%%%%%%%%%%
%\begin{assumption}\label{assumption:B}
%(B-connected graph sequence) We assume that the graph  $G=(V,\mathcal{E}_t)$ is uniformly strongly connected. In other words, there exists some integer $B>0$ (potentially unknown to the robots)  such that the graph with node set $V$ and edge set:
%
%\begin{equation}
%    \mathcal{E}_k^B = \bigcup_{t = kB}^{(k+1)B-1}\mathcal{E}(t)
%\end{equation}
%\label{assumtion:b-connected-graph-sequence}
%\end{assumption}
%
% for any $k\geq 0$ is strongly connected.
 
%
%Then there exist a distribution $\phi(s)$ that $\Pi_{t=s}^\infty W^t \to \phi(s)^\top \bold{1}$.
%%%%%%%%%%%%%%%%%%%%%%%%%%%%%%%%%%%%%%%%%%%%%%%%%%%%%%%%%%%%%%%%%%%%%%%%%%%%%%%%%%%%%%%%%%%%%%%%%%%%%%%%%%%%%%%%%%%%%%%%%%%%%%%%%%%%%%%%%%%%%%%%%%%%%%%%%%%%%%%
%% Define centralized agent here and its rationale.
To evaluate the convergence of our distributed algorithm we obtain the GP estimation of function $g$ through a centralized agent. We define centralized agent as an imaginary robot that receives all the observations at each time $t$ from all robots. We define the central agent's update of its GP parameters as the following based on the stationary distribution:
%
%\begin{align}
%  \mathcal{P}_{t+1}^{ctr} &= \cup_{i=1}^n \tilde{\mathcal{P}}_{t+1}^{i} \cup \mathcal{P}_t^{ctr},\notag\\ 
%	m_{t+1}^{ctr}(\bold p) &= m_{t}^{ctr}(\bold p) + \sum_{i=1}^n \phi_i(t) \tilde{m}_{t+1}^{i}(\bold p),\\
%  \zeta_{t+1}^{ctr}(\bold p) &= \frac{m_{t}^{ctr}(\bold p)\zeta_t^{ctr}(\bold p)+\sum_{i=1}^n \phi_i(t) \tilde{m}_{t+1}^{i}(\bold p)\tilde{\zeta}_{t+1}^i(\bold p)}{m_{t+1}^{ctr}(\bold p)},\notag
%\end{align}
\begin{align}\label{eq:central}
  \mathcal{P}_{t+1}^{ctr} &= \cup_{i=1}^n \tilde{\mathcal{P}}_{t+1}^{i} \cup \mathcal{P}_t^{ctr},\notag\\ 
	m_{t+1}^{ctr}(\bold p) &= m_{t}^{ctr}(\bold p) + \sum_{i=1}^n \frac{\tilde{m}_{t+1}^{i}(\bold p)}{n},\\
  \zeta_{t+1}^{ctr}(\bold p) &= \frac{m_{t}^{ctr}(\bold p)\zeta_t^{ctr}(\bold p)+\frac{1}{n}\sum_{i=1}^n \tilde{m}_{t+1}^{i}(\bold p)\tilde{\zeta}_{t+1}^i(\bold p)}{m_{t+1}^{ctr}(\bold p)},\notag
\end{align}
%If we do the update rule based on the following equation:
%
%\begin{align}
%m^i_{t+1}(\bold p) &= \sum_{ j \in Ne_i\cup\{i\}} W^t_{ij} m^j_t(\bold p) + \tilde{m}^i_{t+1}(\bold p),\\
%\zeta^i_{t+1}(\bold p) &= \frac{\sum_{ j \in Ne_i\cup\{i\}} W^t_{ij} m^j_t(\bold p)\zeta^j_t(\bold p) + \tilde{m}^i_{t+1}(\bold p)\tilde{\zeta}^i_{t+1}(\bold p)}{m_{t+1}^i(\bold p)}. \notag
%\end{align}

%%%%%%%%%%%%%%%%%%%%%%%%%%%%%%%%%%%%%%%%%%%%%%%%%%%%%%%%%%%%%%%%%%%%%%%%%%%%%%%%%%%%%%%%%%%%%%%%%%%%%%%%%%%%%%%%%%%%%%%%%%%%%%%%%%%%%%%%%%%%%%%%%%%%%%%%%%%%%%%%%%%%%%%%%%%%%%%%%%%%%%%%%%%%%%%%%%%%%%%%%%%%%%%%%%%%%%%%%%%%%%%%%%%%%
%In this paper we assume that the weight matrices are based on Metropolis weights
Next we introduce our method to update the robotic network. For each robot $i$ at time step $t$, we transform the observation $\{Z^{i}_{t} \}$ into a set of pseudo-point statistics, as explained in Sec.\ref{single_agent}. Using \eqref{eq:tsdf_transformation} for each new pseudo point $\bold{p} \in \tilde{\mathcal{P}}^i_{t}$ we can summarize the transformed data by averaging the TSDF values into $\tilde{\zeta}^i_t (\bold{p})$ and corresponding counting number of observation in $\tilde{m}^i_t (\bold{p})$. We also need a list that contains the robots that received this mini-batch to make sure each robot, receives each mini-batch exactly once. 
By combining all parameters for each observation $\tilde{\Theta}_t^i := \{ \tilde{\mathcal{P}}_t^i, \tilde{m}_t^i(\tilde{\mathcal{P}}_t^i), \tilde{\zeta}_t^i(\tilde{\mathcal{P}}_t^i), \ell_t^i \}$ define a mini-batch of observations for robot $i$, where $\ell^{i}_t$ is the list of robots that has received the mini-batch up to time $t$. In addition, $\tilde{B}^{i}_t$ defines the set of mini-batches robot $i$ receives at t, and $B^{i}_t$ defines all the batches $i$ retains. The distributed protocol for updating the GP parameters can be summarized as:

% \begin{equation}
% \label{eq:echoless_gp_update}
% \begin{aligned}
% \mathcal{B}_{t+1}^i &= \bigcup_{\tilde{\Theta}_\tau^j \in\mathcal{B}_t^r, r\in Ne_i^t} \tilde{\Theta}_\tau^j \cup \tilde{\Theta}_{t+1}^i\\
% \ell_\tau^j &= \ell_\tau^j \cup \{i\} \text{ for all } \tilde{\Theta}_\tau^j \in\mathcal{B}_{t+1}^i\\
% \mathcal{P}_{t+1}^i &= \bigcup_{\tilde{\Theta}_\tau^j \in \mathcal{B}_{t+1}^i, i \notin \ell_\tau^j} \mathcal{P}_{\tau}^j \cup\mathcal{P}_t^i\\
% m_{t+1}^{i}(\bold{p}) &= m^i_{t}(\bold{p}) + \sum_{\tilde{\Theta}_\tau^j\in \mathcal{B}_{t+1}^i, i \notin \ell_\tau^j} \frac{\tilde{m}^j_\tau(\bold{p})}{n}\\
% \zeta^i_{t+1}(\bold{p}) &= \frac{m^i_{t}(\bold{p}) \zeta^i_{t}(\bold{p}) + \frac{1}{n}\sum_{\tilde{\Theta}_\tau^j \in \mathcal{B}_{t+1}^i, i \notin \ell_\tau^j} \tilde{m}^j_\tau(\bold{p}) \tilde{\zeta}^j_{\tau}(\bold{p})}{m^i_{t+1}(\bold{p})}
% \end{aligned}
% \end{equation}
%
\begin{equation}
\label{eq:echoless_gp_update}
\begin{aligned}
\mathcal{\tilde{B}}_{t+1}^i &= \bigcup_{\tilde{\Theta}_\tau^j \in\mathcal{B}_t^r, r\in \mathcal{N}^{i}_t, i \not \in \ell_{\tau}^{j}} \tilde{\Theta}_\tau^j \cup \tilde{\Theta}_{t+1}^i\\
\mathcal{B}_{t+1}^i &= \mathcal{B}_{t}^i \cup \mathcal{\tilde{B}}_{t+1}^i\\
\ell_\tau^j &= \ell_\tau^j \cup \{i\} \text{ for all } \tilde{\Theta}_\tau^j \in\mathcal{\tilde{B}}_{t+1}^i\\
\mathcal{P}_{t+1}^i &= \bigcup_{\tilde{\Theta}_\tau^j \in \mathcal{\tilde{B}}_{t+1}^i} \mathcal{P}_{\tau}^j \cup\mathcal{P}_t^i\\
m_{t+1}^{i}(\bold{p}) &= m^i_{t}(\bold{p}) + \sum_{\tilde{\Theta}_\tau^j\in \mathcal{\tilde{B}}_{t+1}^i} \frac{\tilde{m}^j_\tau(\bold{p})}{n}\\
\zeta^i_{t+1}(\bold{p}) &= \frac{m^i_{t}(\bold{p}) \zeta^i_{t}(\bold{p}) + \frac{1}{n}\sum_{\tilde{\Theta}_\tau^j \in \mathcal{\tilde{B}}_{t+1}^i} \tilde{m}^j_\tau(\bold{p}) \tilde{\zeta}^j_{\tau}(\bold{p})}{m^i_{t+1}(\bold{p})}
\end{aligned}
\end{equation}
%$(\lceil\frac{T}{B}\rceil + (n - 1))B+\tau-t$
where a mini-batch is expired from a robot $i$ when its list $\ell_{t}^{i}$ is full. In the mentioned protocol, for convenience of representation we considered the list of robots for each mini-batch. In practice the copies of a mini-batch do not have to communicate. Instead, due to Proposition~\ref{prop:1} it is sufficient that when a robot received a mini-batch $\tilde{\Theta}^j_\tau$ at time $t$, keeps it for $(\lceil\frac{\tau}{B}\rceil + (n - 1))B-t$ time steps, and then remove it. By then all robots have received it. More precisely, the first two lines of \eqref{eq:echoless_gp_update} are processed on the network level, and the rest in practice are processed on the QuadTree node level.
% \textcolor{blue}{we need a specific discussion of which steps in the aforementioned protocol are local-only and which require message passing. This roughly maps to: which updates are done at the node-layer and which are done at the link layer of the network?}
%%%%%%%%%%%%%%%%%%%%%%%%%%%%%%%%%%%%%%%%%%%%%%%%%%%%%%%%%%%%%%%%%%%%%%%%%%%%%%%%%%%%%%%%%%%%%%%%%%%%%%%%%%%%%%%%%%%%%%%%%%%%%%%%%%%%%%%%%%%%%%%%%%%%%%%%%%%%%%%%%%%%%%%%%%
% $\tilde{\mathcal{D}}_{t}^i = (\tilde{\mathcal{X}}_t^i, \tilde{\bold{y}}_t^i)$
\begin{proposition}\label{prop:1}
Let $Z^i_t$ be the data observed by robot $i$ at time $t$, associated with pseudo-points $\tilde{\mathcal{P}}_t^i \subset \mathcal{P}$, number of observations $\tilde{m}_t^i(\bold{p})$ and average observation $\tilde{\zeta}_t^i(\bold{p})$ for $\bold{p} \in \mathcal{P}$. If the data streaming stops at some time $T < \infty$, then at time $t =(\lceil\frac{T}{B}\rceil + (n - 1))B$, the distributions $\mathcal{G}\mathcal{P}(\mu_t^i(\bold{x}), k_t^i(\bold{x},\bold{x}'))$ maintained by each robot $i$, specified according to \eqref{eq:compressed_sp_gp} with parameters in \eqref{eq:echoless_gp_update} are exactly equal to the distribution $\mathcal{G}\mathcal{P}(\mu_t^{ctr}(\bold{x}), k_t^{ctr}(\bold{x},\bold{x}'))$ of the centralized estimator with parameters in \eqref{eq:central}, i.e., $\mu_t^i(\bold{x}) = \mu^{ctr}_t(\bold{x})$ and $k_t^i(\bold{x},\bold{x}') = k^{ctr}_t(\bold{x},\bold{x}')$ almost surely for all $i \in \mathcal{V}$, $\bold{x},\bold{x}'$.
\end{proposition}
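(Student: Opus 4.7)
The plan is to reduce the claim to an equality of sufficient parameters, then prove an information-propagation lemma, and finally check that the two parameter recursions aggregate the same underlying data.

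\textbf{Step 1: Reduction to parameters.} Observe that the posterior mean and covariance in \eqref{eq:compressed_sp_gp} are a deterministic function of the triple $(\mathcal{P},\, m(\cdot),\, \zeta(\cdot))$, since $Z$ and $k_0$ are fixed. Hence it suffices to show that at $t^\star := (\lceil T/B\rceil + (n-1))B$, each robot's triple $(\mathcal{P}_{t^\star}^i,\, m_{t^\star}^i,\, \zeta_{t^\star}^i)$ in \eqref{eq:echoless_gp_update} equals the centralized triple $(\mathcal{P}_{t^\star}^{ctr},\, m_{t^\star}^{ctr},\, \zeta_{t^\star}^{ctr})$ in \eqref{eq:central}.

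\textbf{Step 2: Information-propagation lemma.} I would prove: for any mini-batch $\tilde{\Theta}_\tau^j$ generated at time $\tau \le T$, every robot $i \in \mathcal{V}$ has $\tilde{\Theta}_\tau^j \in \mathcal{B}_{s}^i$ for some $s \le (\lceil \tau/B\rceil + (n-1))B$, and the list $\ell_\tau^j$ guarantees that the mini-batch contributes to robot $i$'s sums in the last three lines of \eqref{eq:echoless_gp_update} exactly once. The argument goes by induction over consecutive $B$-windows $\mathcal{W}_k := [kB, (k+1)B-1]$. Let $S_k \subseteq \mathcal{V}$ be the set of robots holding $\tilde{\Theta}_\tau^j$ by the end of $\mathcal{W}_k$, starting from $S_{\lceil \tau/B \rceil - 1} = \{j\}$. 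If $S_k \neq \mathcal{V}$, then by Assumption~\ref{assumption:B} the union graph on $\mathcal{W}_{k+1}$ is connected, so some edge of $\mathcal{E}_{k+1}^B$ crosses the cut $(S_k,\mathcal{V}\setminus S_k)$; the first line of \eqref{eq:echoless_gp_update} then forces $|S_{k+1}| \ge |S_k|+1$. Since $|S_{\lceil \tau/B\rceil-1}|=1$, after at most $n-1$ additional windows $S_k = \mathcal{V}$, which gives the stated time bound. The list $\ell$ guarantees each copy is counted only once because the filter $i \notin \ell_\tau^j$ in \eqref{eq:echoless_gp_update} excludes duplicates.

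\textbf{Step 3: Matching the aggregated parameters.} Because data streaming stops at $T$, every mini-batch is generated by time $T$, so by Step 2 every robot has received every mini-batch exactly once by time $t^\star$. Applying the recursion for $\mathcal{P}^i$ telescopes to $\mathcal{P}_{t^\star}^i = \bigcup_{\tau \le T,\, j\in\mathcal{V}} \tilde{\mathcal{P}}_\tau^j$, which matches $\mathcal{P}_{t^\star}^{ctr}$ from \eqref{eq:central}. The counting recursion telescopes to $m_{t^\star}^i(\bold p) = \tfrac{1}{n}\sum_{\tau \le T,\, j\in\mathcal{V}} \tilde{m}_\tau^j(\bold p)$, equal to $m_{t^\star}^{ctr}(\bold p)$. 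For $\zeta$, a short induction on the order in which mini-batches are absorbed shows that the running weighted mean in \eqref{eq:echoless_gp_update} equals
\begin{equation*}
\zeta_{t^\star}^i(\bold p) = \frac{\tfrac{1}{n}\sum_{\tau \le T,\, j\in\mathcal{V}} \tilde{m}_\tau^j(\bold p)\,\tilde{\zeta}_\tau^j(\bold p)}{m_{t^\star}^i(\bold p)},
\end{equation*}
which coincides with $\zeta_{t^\star}^{ctr}(\bold p)$ from \eqref{eq:central}. Step 1 then yields $\mu_{t^\star}^i = \mu_{t^\star}^{ctr}$ and $k_{t^\star}^i = k_{t^\star}^{ctr}$.

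\textbf{Main obstacle.} The delicate part is Step 2: formalizing the $+1$-per-window growth of $S_k$ without double counting and deriving precisely the $\lceil T/B\rceil + (n-1)$ bound. One must be careful that mini-batches generated near the end of a window $\mathcal{W}_k$ may not have any chance to hop within that window, which is exactly why the $\lceil \cdot/B\rceil$ term appears; and that the union-of-edges characterization of connectedness (rather than instantaneous connectedness) only ensures one hop per $B$-window, forcing the $(n-1)B$ factor. Once this is proved carefully, Step 3 is just algebra on the two telescoping recursions.
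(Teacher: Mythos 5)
Your proposal is correct and follows essentially the same route as the paper's proof: reduce to equality of the sufficient statistics $(\mathcal{P}, m, \zeta)$, show by a cut-crossing argument over $B$-windows that the informed set grows by at least one robot per window (the paper phrases this via the sign of entries of $\prod_{s=\tau}^{t-1}W_s$ and its sets $A_t, \bar{A}_t$, which is your $S_k$), and then telescope both recursions to the same aggregate sums. Your Step 3 expansion is precisely the paper's equation \eqref{eq:k-expansion}, so there is nothing substantive to add.
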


% \textcolor{blue}{this proposition is a run-on sentence. State more succinctly. what formal conditions are required? What is the explicit statement we need (perhaps as a display equation?), and where is it used henceforth?}
%%%%%%%%%%%%%%%%%%%%%%%%%%%%%%%%%%%%%%%%%%%%%%%%%%%%%%%%%%%%%%%%%%%%%%%%%%%%%%%%%%%%%%%%%%%%%%%%%%%%%%%%%%%%%%%%%%%%%%%%%%%%%%%%%%%%%%%%%%%%%%%%%%%%%%%%%%%%%%%%%%%%%%%%%%
%% Proof for Proposition 
% \textcolor{green}{In the current setup we do not need lemma 1 or  Metropolis weights, the assumption 1 is enough for convergence. Previously we defined the central agent based on convergence of echo method, in that case we need them.}

\begin{proof}
With regard to Eq \eqref{eq:compressed_sp_gp}, it is sufficient to show that at $t = (\lceil\frac{T}{B}\rceil + (n - 1))B$, $m_t^i(\bold{p}) = m^{ctr}_t(\bold{p})$ and $\zeta_t^i(\bold{p}) = \zeta^{ctr}_t(\bold{p})$ for all $i \in V$, $\bold{p} \in \mathcal{P}$. We express $m_t^i(\bold{p})$ and $\zeta_t^i(\bold{p})$ in terms of $\tilde{m}_\tau^j(\bold{p})$ and $\tilde{\zeta}_\tau^j(\bold{p})$ for arbitrary $\bold{p} \in \mathcal{P}$ and $\tau \leq t$. The key is to realize whether mini-batch $\tilde{\Theta}_\tau^j$ is received by robot $i$. Since the mini-batch exchanges are happening based on the communication graph structure, the elements of $\Pi^{t-1}_{s=\tau}W_s$ determine which robots have received a mini-batch released at time $\tau$ by time $t$. Precisely, if $[\Pi^{t-1}_{s=\tau}W_s]_{ij} > 0$, then robot $i$ has received mini-batch $\tilde{\Theta}_\tau^j$ by time $t$ and otherwise, if $[\Pi^{t-1}_{s=\tau}W_s]_{ij} = 0$, it has not received it. Let $sign(x)$ denote the sign of a scalar $x$ with $sign(0) = 0$. Expanding \eqref{eq:echoless_gp_update} recursively leads to:
\begin{align}
\label{eq:k-expansion}
m_t^i(\bold{p}) &= \sum_{\tau=0}^{t}\sum_{i=1}^n sign([\Pi^{t-1}_{s=\tau}W_s]_{ij}) \frac{\tilde{m}_\tau^j(\bold{p})}{n}\\
\zeta^i_t(\bold{p}) &= \frac{1}{m^i_{t}(\bold{p})}\sum_{\tau=0}^{t}\sum_{j=1}^n sign([\Pi^{t-1}_{s=\tau}W_s]_{ij})  \frac{\tilde{m}_{\tau}^j(\bold{p})}{n}\tilde{\zeta}^j_{\tau}(\bold{p})\notag
\end{align}
Since the data collection stops at some finite time $T$, $\tilde{m}_\tau^i(\bold{p}) = \tilde{\zeta}_\tau^i(\bold{p}) = 0$ for all $\tau > T$, $i \in V$. Assumption~\ref{assumption:B} leads to the fact that $sign([\Pi^{(n-1)B+\tau-1}_{s=\tau}W_s]_{ij})>0$ for all $i,j \in V$, and it is not necessary true for $sign([\Pi^{(n-1)B+\tau-2}_{s=\tau}W_s]_{ij})$. In this regard, at time step $t$, let $A_t$ be set of all robots who received the mini-batch produced by robot $i$ at time step $\tau$, and $\bar{A}_t$ be the set of rest of robots who did not received it. At first $A_\tau=\{i\}$ and rest of robots are in $\bar{A}_\tau$ including $j$. Note that Assumption~\ref{assumption:B} means that in each $B$ step there is a time step that there is an edge between $A_t$ and $\bar{A}_t$, otherwise there will be $B$ time steps such that the graph will be unconnected. At such a point, we can consider the end of this edge in $\bar{A}_t$, remove it and add it to $A_t$. Since at the beginning there is $n-1$ robots in $\bar{A}_\tau$ at some point in $(n-1)B$ steps $j$ will be added to $A_t$. Comparing \eqref{eq:k-expansion} and \eqref{eq:central}, concludes the equality $\mu_t^i(\bold{x}) = \mu^{ctr}_t(\bold{x})$ and $k_t^i(\bold{x},\bold{x'}) = k^{ctr}_t(\bold{x},\bold{x'})$ at $t =(\lceil\frac{T}{B}\rceil + (n - 1))B$.
\end{proof}

%%%%%%%%%%%%%%%%%%%%%%%%%%%%%%%%%%%%%%%%%%%%%%%%%%%%%%%%%%%%%%%%%%%%%%%%%%%%%%%%%%%%%%%%%%%%%%%%%%%%%%%%%%%%%%%%%%%%%%%%%%%%%%%%%%%%%%%%%%%%%%%%%%%%%%%%%%%%%%%%%%%%%%%%%%
\begin{table}[]
    \centering
    \caption{List of Notations}
    \label{table:list_of_symbols}
    \def\arraystretch{1.15}
    \begin{tabular}{|c|c|}
        \hline
        Symbols & Annotations \\
        \hline
        $\tilde{\Theta}^i_t$ & Mini-batch containing the transformed 
        local observation \\
        & robot $i$ observes at time $t$\\
        $\tilde{\Delta}^{i,k}_t $&  The mini-batches robot $i$ sends to $k$ at $t$\\
        $\tilde{\mathcal{B}}^i_t $&  The mini-batches robot $i$ receives at $t$\\
        $\mathcal{B}^i_t $&  The mini-batches robot $i$ retains at $t$\\
        $\mathcal{P}$ & Locations of the pseudo-points  \\
        $\zeta(p)$ & (Averaged) observation for a pseudo-point $p \in P$ \\
        
        $m(p)$ & Count of observation\\
        $\bold{\Tilde{\mathcal{P}}}^{i}_t$ & Locally observed pseudo point for robot $i$ at $t$\\
        
        $\bold{\mathcal{P}}^{i}_t$ &  All Pseudo-points robot $i$ retains at $t$ \\
        
        $\bold{\Bar{\mathcal{P}}}^{i}_t$ & Pseudo-points robot $i$ uses to update $\mathcal{GP}$ at $t$ \\
        \hline
    \end{tabular}
    %\vspace{-10pt}
\end{table}

\begin{figure}[h!]
    \centering
    \includegraphics[width=1.0 \linewidth]{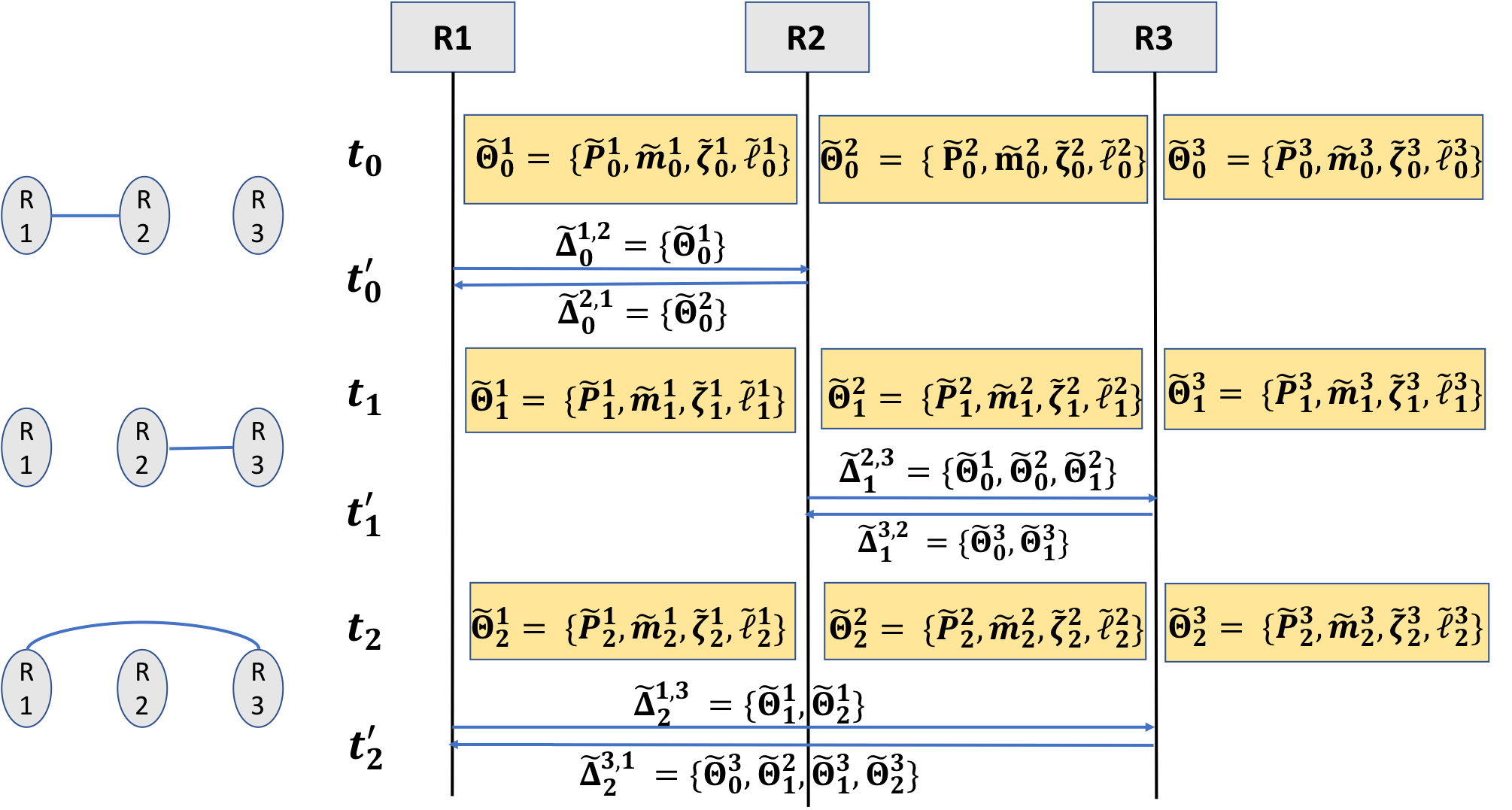}
    \caption{Diagram for message passing between agents, when communication graph $G_t$ is time-varying and $\lvert V \rvert = 3$. The yellow box denotes the event when new batch of pseudo-point and TSDF is observed, and a new message is constructed (Line (1-2) of Algorithm \ref{alg:distributed_mapping}). The left-most column represents $G_t$. To de-couple sensing and communication, at each time step $t$ new observation are collected, and at $t'$ messages are sent. (Line 4-5 of Algorithm \ref{alg:distributed_mapping})}
    \label{fig:seq_diagram}
\end{figure}

Synthesizing section III.A and the distributed protocol as proposed in \eqref{eq:echoless_gp_update}, we put forth the following algorithm for distributed update of pseudo-points statistics of the TSDF. The notations for the list of symbols can be found in the Table \ref{table:list_of_symbols}, and an illustration for the message passing between the agents can be found in Fig \ref{fig:seq_diagram}. The GP parameters $\mathcal{GP}^{i}_{t+1}$ for the corresponding QuadTree Node $T^{i}_{t+1}$ is updated according to \eqref{eq:echoless_gp_update} (Line 10 of Algorithm \ref{alg:distributed_mapping}). Finally, after merging the pseudo-points statistics from the neighbors, if the QuadTree Node $T^{i}_{t+1}$ has number of pseudo-points larger than \textit{maxLeafSize}, the node including  its pseudo-point statistics is split recursively (Line 12).
% \textcolor{blue}{if there is time, it would be helpful to have a diagram representation of Algorithm 1, illuminating what is processed locally, and what is exchanged between neighboring robots.}

\begin{algorithm}
\caption{Distributed Mapping for robot $i$}\label{alg:cap}
\begin{algorithmic}[1]
\Require{$\text{new local observations } z_{t+1}^{i}, \text{robot position } x_{t+1}^{i}$
% \StatexIndent $\{ \bar{P}_{t-1}^{j}, \bar{y}_{t-1}^{j}, \bar{c}_{t-1}^{j} \} \forall R_j \in \text{neighbors}(R_i)$,
$\text{Octree }  T^{i}_{t}$,  $\mathcal{GP}^{i}_{t}$, $\text{Neighbors } Ne^{i}_{t+1}$}

\Ensure{$\text{Octree } T^{i}_{t+1}$, $\mathcal{GP}^i_{t+1}$}
\State $\Tilde{P}_{t+1}^{i}, \Tilde{\zeta}_{t+1}^{i}, \Tilde{m}_{t+1}^{i} \gets \text{computePseudoPoints}(x_{t+1}^{i}, z_{t+1}^{i})$
\State $\Tilde{\Theta}_{t+1}^{i} \gets \{ \tilde{\mathcal{P}}_{t+1}^i, \tilde{m}_{t+1}^i(\tilde{\mathcal{P}}_{t+1}^i), \tilde{\zeta}_{t+1}^i(\tilde{\mathcal{P}}_{t+1}^i), \ell_{t+1}^i \}$
\For {$j \in Ne^{i}_{t+1}$}
    \State ${\Tilde{\Delta}}_{t+1}^{i,k} = \Tilde{\Theta}^{i}_{t+1} \underset{\Tilde{\Theta}^{j}_{\tau} \in \mathcal{B}^{i}_t, j \not \in l^{k}_{t} }{\bigcup} \Tilde{\Theta}^{k}_{\tau}$
    \State \text{sendToNeighbor}$(\Tilde{\Delta}^{i,k}_{t+1})$ 
\EndFor
\State $\Bar{P}_{t+1}^{i}  \gets \tilde{P}^{i}_{t+1} \underset{j \in Ne^{i}_{t+1}, \Tilde{\Theta}^{j}_{\tau} \in \Tilde{\Delta}^{j, i}_{t}}{\bigcup} \Tilde{P}^{j}_{\tau}$
\State $T^{i}_{t+1} = T^{i}_{t} $
\For{$p_{t+1}^{i} \in \bar{P}_{t+1}^{i}$}
    \State
    \text{$T^{i}_{t+1}$.insert}$(p_{t+1}^{i}, \Bar{\zeta}_{t+1}^{i}(p_{t+1}), \Bar{m}_{t+1}^{i}(p_{t+1})$)
\EndFor

% \COMMENT{update $GP^{i}_{t-1}$ with the new pseudo points}

\State $T^{i}_{t+1} \gets \text{recursiveSplitNode}(T^{i}_{t+1})$ 
\end{algorithmic}

\label{alg:distributed_mapping}
\end{algorithm}

%%%%%%%%%%%%%%%%%%%%%%%%%%%%%%%%%%%%%%%%%%%%%%%%%%%%%%%%%%%%%%%%%%%%%%%%%%%%%%%%%%%%%%%%%%%%%%%%%%%%%%%%%%%%%%%%%%%%%%%%%%%%%%%%%%%%%%%%%%%%%%%%%%%%%%%%%%%%%%%%%%%%%%%%%%%%%%%
\section{Experimental Results}

We evaluate our algorithm on two public LiDAR datasets. The first dataset is from the Robotics Data Set
Repository (Radish) \cite{Radish}, which contains LiDAR sequences and odometry recorded from different environments. For each environment, we divide the sequence into equal lengths to simulate data obtained from a robot team. The second dataset is the North Campus Long-Term (NCLT) dataset \cite{nclt}, which contains multiple long sessions for a single robot navigating on University of Michigan's campus. The NCLT dataset includes challenging situations such as moving obstacles and different weather and lighting conditions. We show that using the time-varying distributed mapping algorithm (Algorithm \ref{alg:distributed_mapping}) the estimation of the \textit{TSDF} by the distributed team of robots  converge to that of the centralized agent, both qualitatively and quantitatively.

%To demonstrate the effectiveness of our algorithm, we experiment extensively on two public LiDAR dataset repositories. The first dataset comes from the Robotics Data Set Repository \cite{Radish}, which contains LiDAR sequences recorded from different environments. For each environment we divide the sequence into equal lengths and simulate a multi-robot environment. The second dataset North Campus Long-Term \cite{nclt} contains multiple long sessions for a single robot, and includes challenging situations such as moving obstacles and different weather and lighting conditions. We show that using the time-varying distributed mapping algorithm (Eq. \ref{eq:echoless_gp_update} and Algorithm \ref{alg:distributed_mapping}) the estimation of the \textit{TSDF} agents converge to the centralized agent, both qualititively and quantitively.  

To evaluate the accuracy of the TSDF mapping, we evaluate the predictions of each individual agent with respect to the prediction of the centralized agent according to the following metric:
\begin{equation}
    \textit{RMSE}^{i} = \sqrt{\frac{\sum_{p \in X, \hat{z}(p) \in (-h, h)}(\hat{y}^{i}(p) - \hat{z}(p))^2} {\sum_{p \in X} \mathds{1}(\hat{z}(p) \in (-h, h))}}
\label{eq:rmse}
\end{equation}
where $\hat{z}(p)$ is the prediction of the centralized agent, $h$ is the truncated distance, $\hat{y}^{i}(p)$ is the prediction of the $i$th agent, and $X$ denotes the entire environment. We demonstrate the results first on the distributed agents without any communication, i.e. each agent maps the own area it has visited, and then on the team of agents using the distributed update algorithm. To simulate the neighborhood $Ne^{i}_{t}$ of the robot $i$, we compute the distance between all pairs of robots at all $t$, and connect two robots if the distance is lower than a threshold $r$. 

%%%%%%%%%%%%%%%%%%%%%%%%%%%%%%%%%%%%%%%%%%%%%%%%%%%%%%%%%%%%%%%%%%%%%%%%%%%%%%%%
\subsection{Radish Datasets}

For the Radish \cite{Radish} datasets, we selected the following three environments: Intel Research Lab, Orebro and MIT CSAIL. A single sequence was recorded for each environment, and for each sequence we divide into five disjoint sub-sequences with equal length, and label them into sub-sequence $1,2,3,4,5$. 

% As each sub-sequence starts and ends with the same area as the preceding and following sequences, we assumed that the agents are in communication with the agents with adjacent indices, as illustrated by Figure \ref{}. We used the following matrix to denote the update weights of each agent, i.e. each agent updates its own GP parameters with row(i).
% $$ W = 
% \begin{bmatrix}
% 0.75 & 0.25 & 0 & 0 & 0 \\
% 0.25 & 0.5 & 0.25 & 0 & 0 \\
% 0 & 0.25 & 0.5 & 0.25 & 0 \\
% 0 & 0 & 0.25 & 0.5 & 0.25 \\
% 0 & 0 & 0 & 0.25 & 0.75
% \end{bmatrix}
% $$

\begin{figure}[t]
    \centering
    \begin{subfigure}{0.2375 \textwidth}
        \centering
         \includegraphics[trim={7cm, 5cm, 6cm, 3.5cm}, clip, height=5cm]{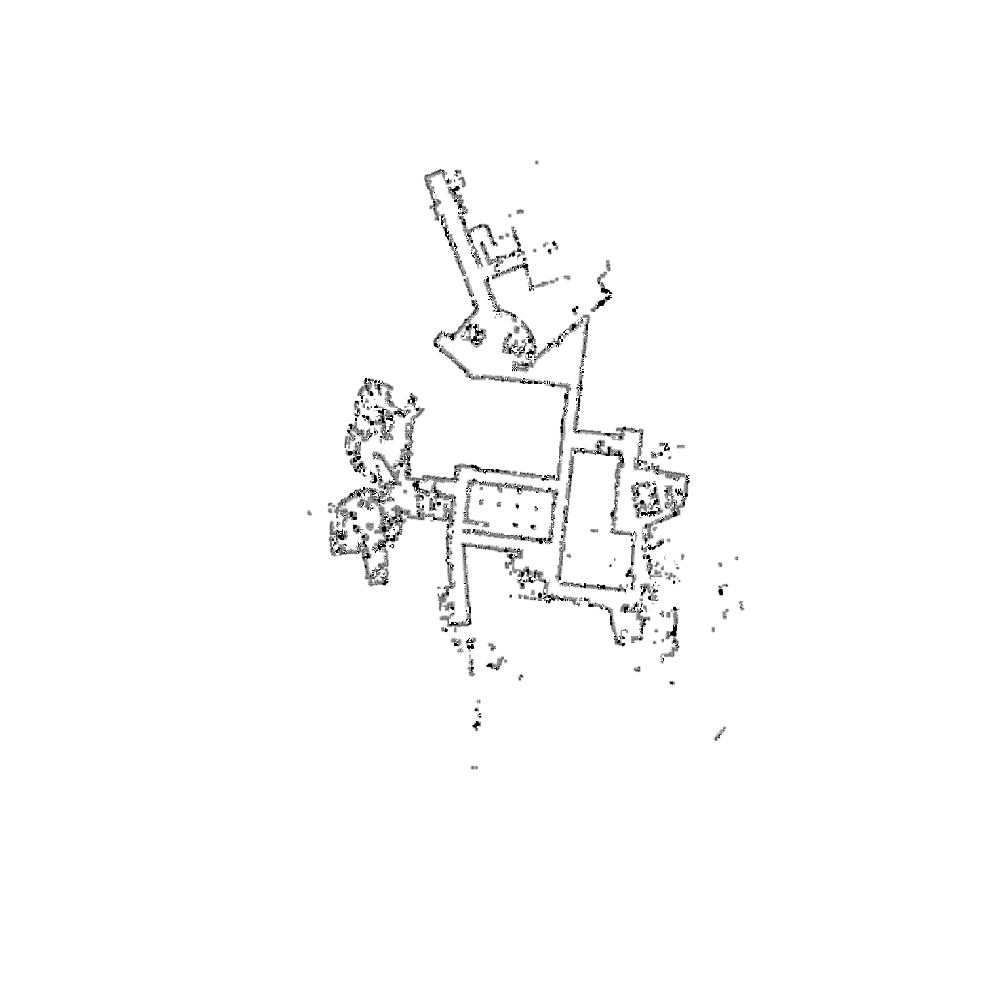}
    \subcaption{Central Agent}
    \end{subfigure}
    \begin{subfigure}{0.2375 \textwidth}
        \centering
         \includegraphics[trim={7cm, 5cm, 3cm, 3.5cm}, clip, height=5cm]{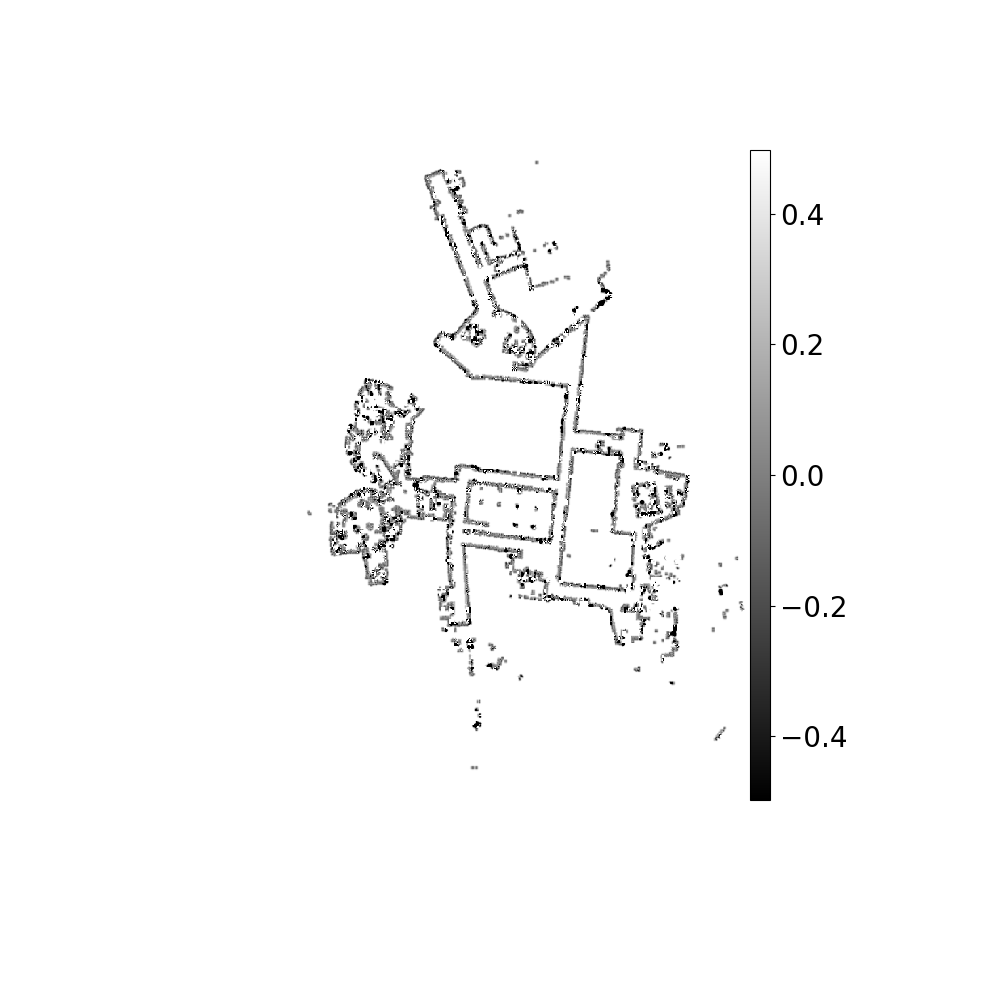}
    \subcaption{Agent 3}
    \end{subfigure}
    \caption{TSDF estimates on the MIT-CSAIL dataset from the Radish repository \cite{Radish}.}
    \label{fig:mit_csail}
\end{figure}

\begin{figure}[t]
    \centering
    \begin{subfigure}{0.235\textwidth}
         \includegraphics[trim={0.5cm, 2cm, 1cm, 2.75cm}, clip, width=\linewidth, height=3.5cm]{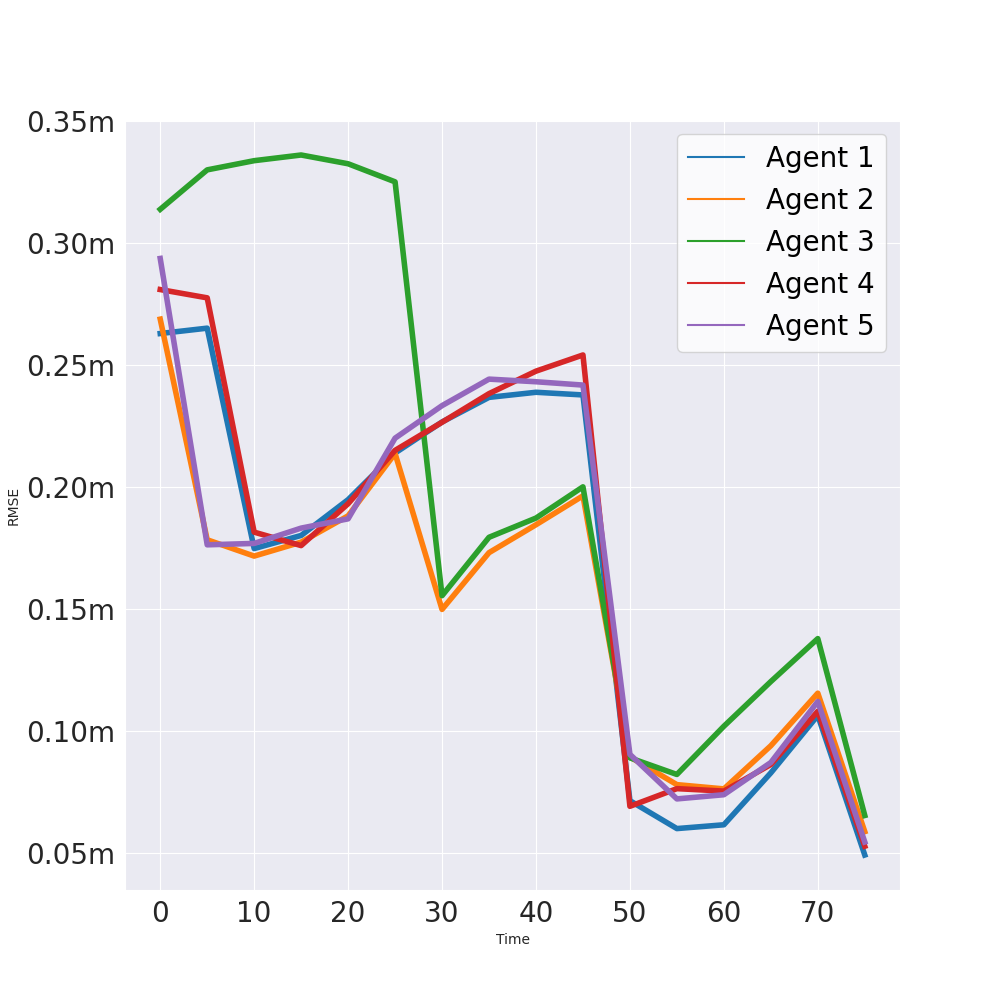}
    \subcaption{RMSE}
    \end{subfigure}
    \begin{subfigure}{0.235\textwidth}
         \includegraphics[trim={0.5cm, 2cm, 1.5cm, 2.75cm}, clip, width=\linewidth, height=3.5cm]{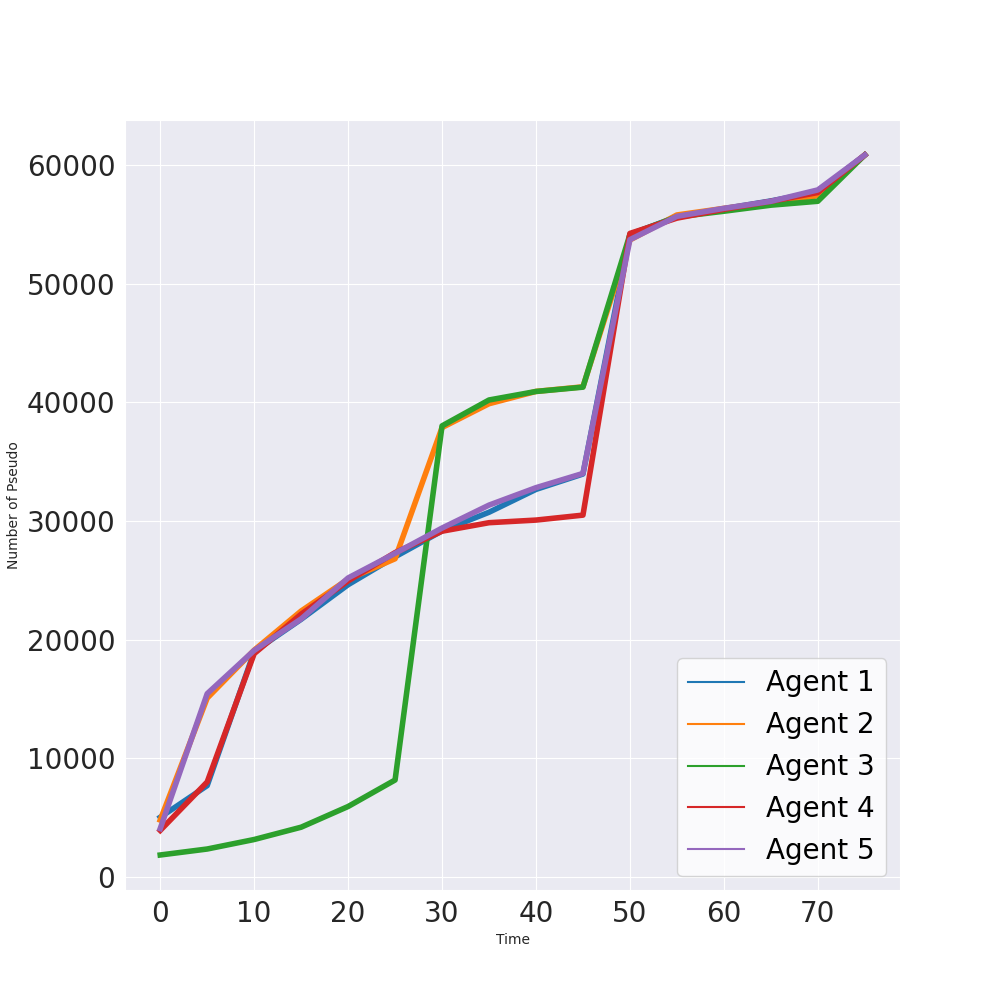}
    \subcaption{Number of Pseudo-Points}
    \end{subfigure}
\caption{Metrics for the MIT-CSAIL dataset from the Radish repository. As the robots explore new parts of the environment, the centralized agent accumulates more information than the distributed agents because some newly acquired messages are not passed to all agents. Hence the RMSE goes up for some periods of time. However, over the long-term horizon $T$ as the messages are passed to all agents the RMSE goes to zero asymptotically. }
\label{fig:metric_csail}
\end{figure}

%% Should be in the next section.
\begin{figure*}[t]
    \begin{subfigure}{0.33 \textwidth}
         \includegraphics[trim={6cm, 3cm, 6cm, 3cm}, clip, width=\linewidth]{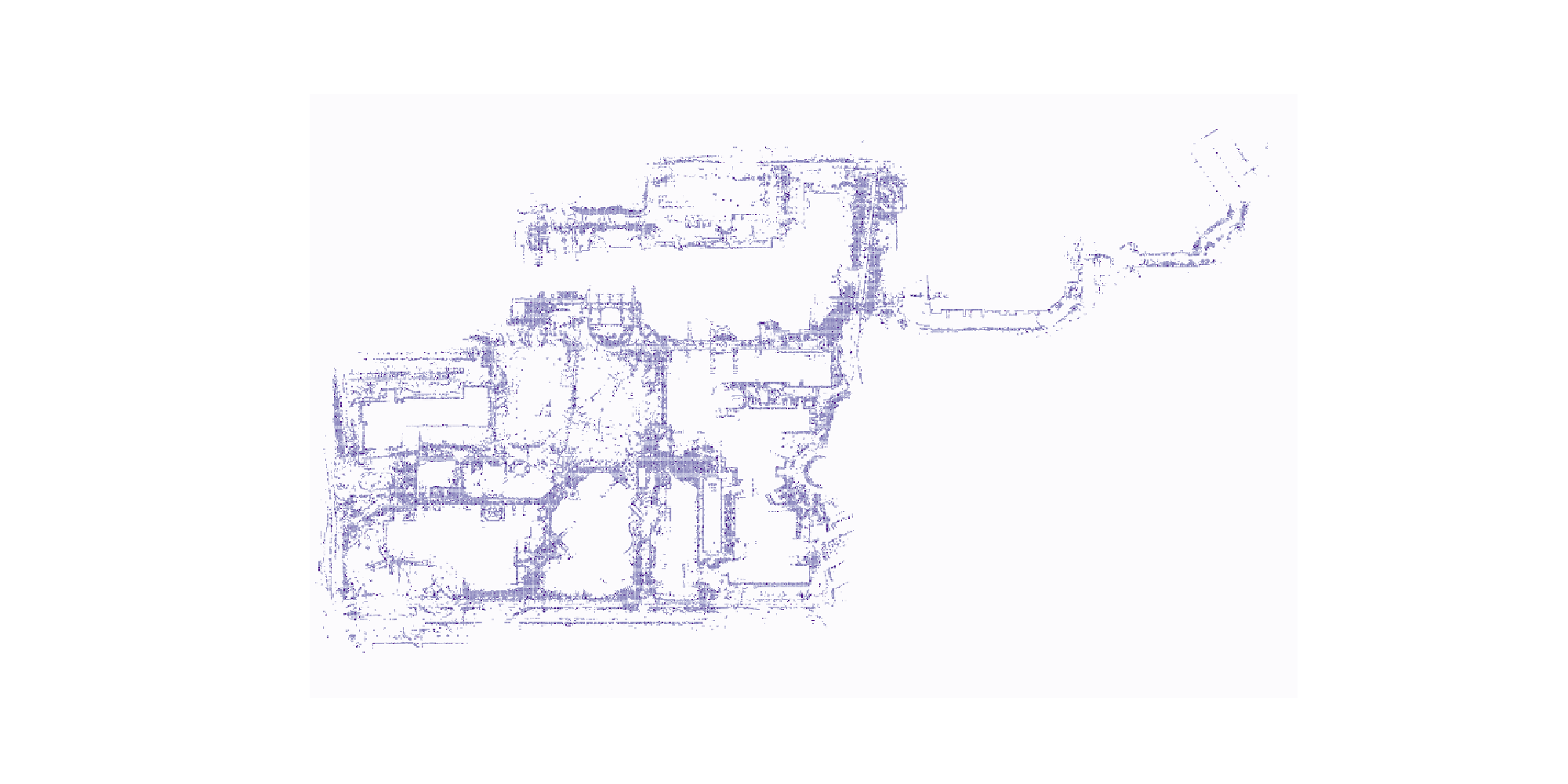}
    \subcaption{Central Agent}
    \end{subfigure}
    \begin{subfigure}{0.33 \textwidth}
         \includegraphics[trim={6cm, 3cm, 6cm, 3cm}, clip, width=\linewidth]{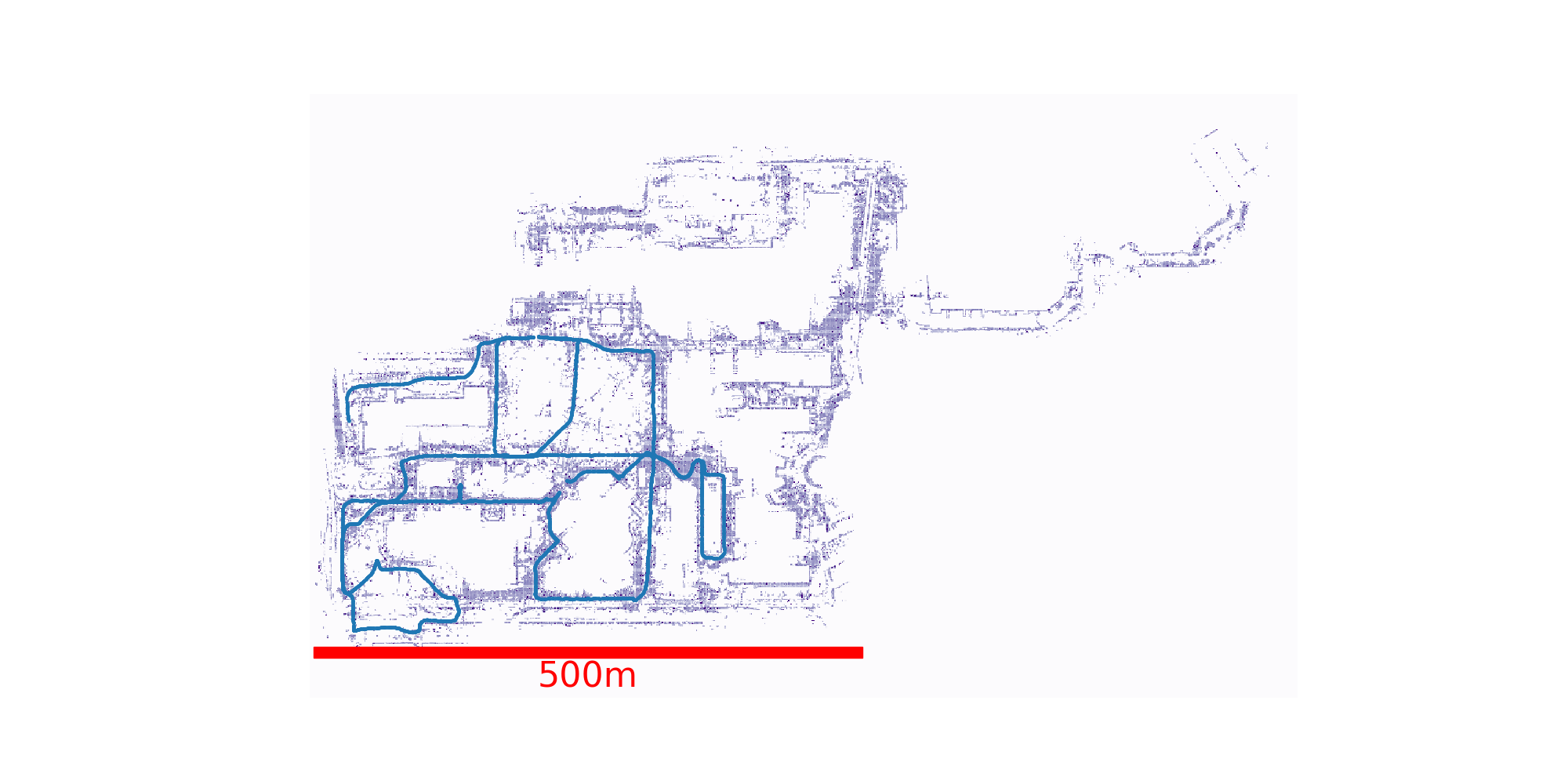}
    \subcaption{Agent 1}
    \end{subfigure}
    \begin{subfigure}{0.33 \textwidth}
         \includegraphics[trim={6cm, 3cm, 6cm, 3cm}, clip, width=\linewidth]{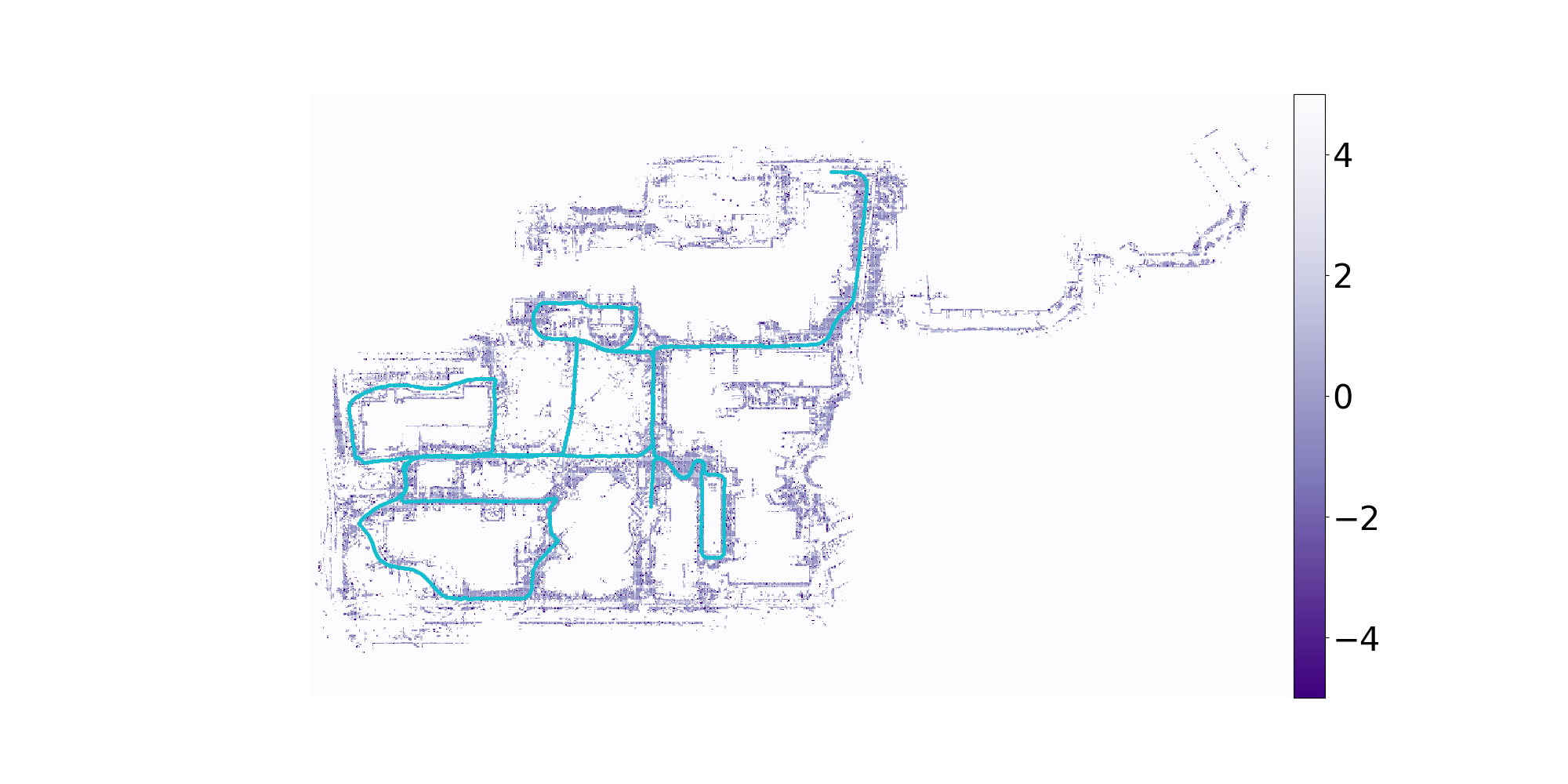}
    \subcaption{Agent 10}
    \end{subfigure}
    % \begin{subfigure}{0.24 \textwidth} \includegraphics[trim={7cm, 3cm, 7cm, 3cm}, clip, width=\linewidth]{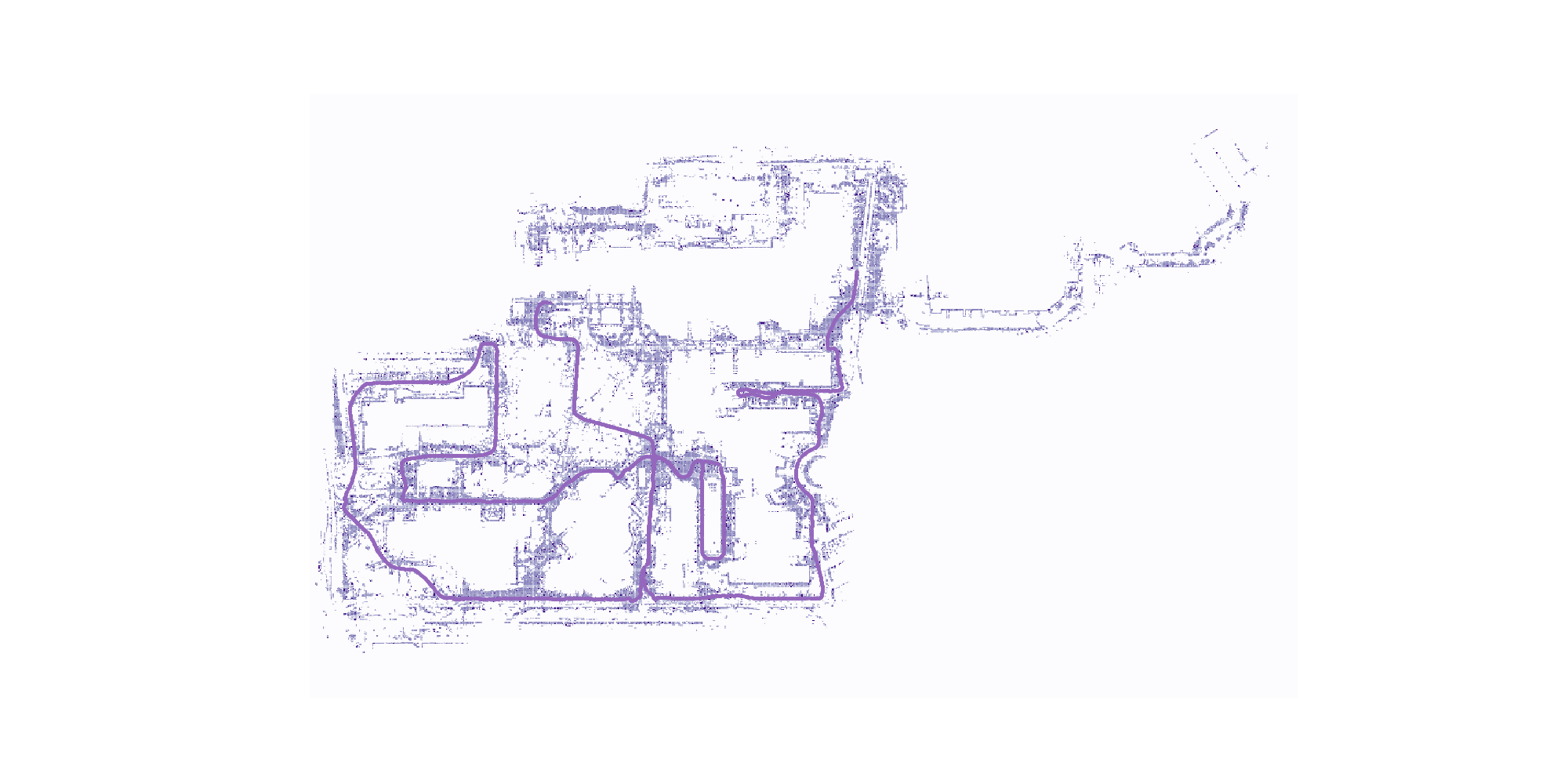}
    % \subcaption{Agent 5}
    % \end{subfigure}
\caption{TSDF estimates from the North-Campus Long-Term Dataset \cite{nclt}}
\label{fig:tsdf_nclt}
\end{figure*}

\begin{figure}[t]
    \centering
    \begin{subfigure}{0.235\textwidth}
         \includegraphics[trim={1.25cm, 2cm, 1cm, 3cm}, clip, width=\linewidth, height=3.5cm]{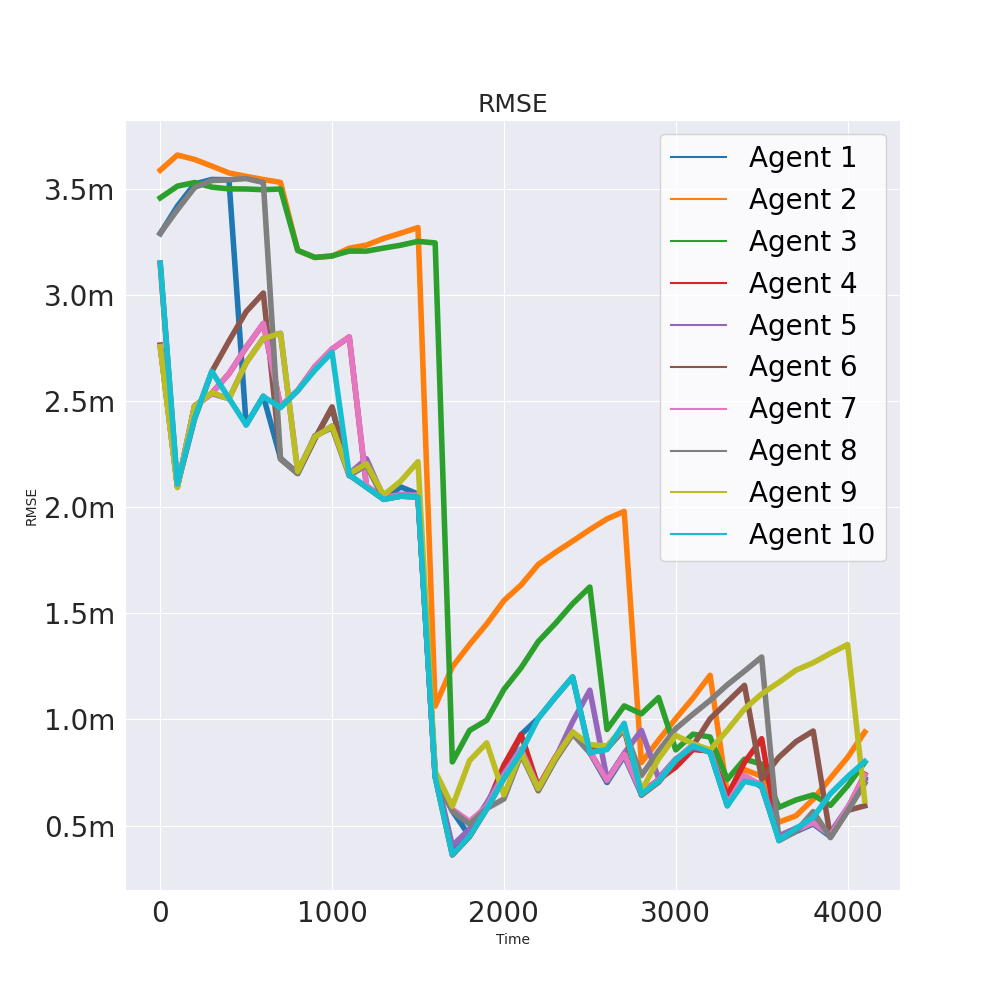}
    \subcaption{RMSE}
    \end{subfigure}
    \begin{subfigure}{0.235\textwidth}
         \includegraphics[trim={0.25cm, 2cm, 1.5cm, 3cm}, clip, width=\linewidth, height=3.5cm]{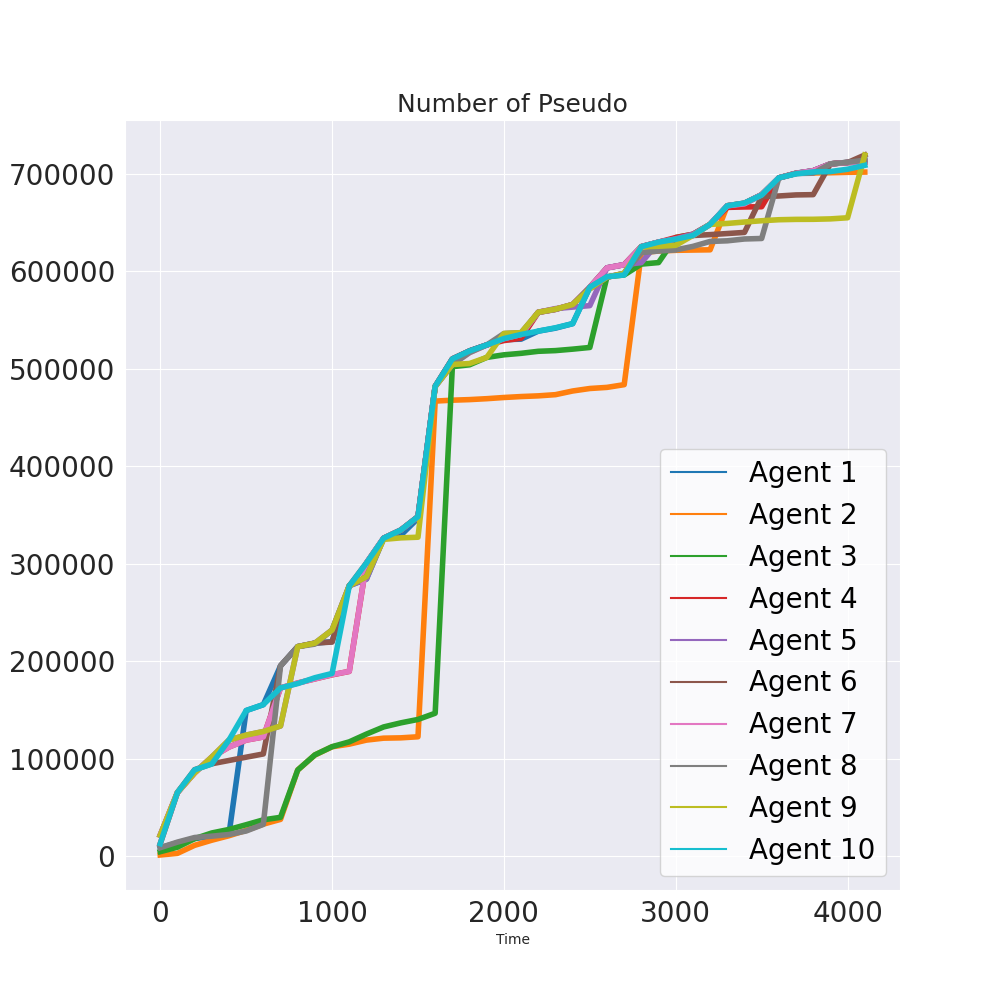}
    \subcaption{Number of Pseudo-Points}
    \end{subfigure}
\caption{Metrics over Time for the NCLT Dataset}
\label{metric_nclt}
\end{figure}

For each of the dataset, we set $h = 0.5$, and we assumed initially the entire environment to be in free space, i.e. $\mu_{0} = 0.5$. $r = 20m$ is selected as the communication range based on the size of the environments. We used a grid size of 0.1 for all agents in each dataset, and used the following parameters for the Gaussian Process: $c = 1.0$, $l = 0.1$, $\sigma=0.1$. In addition, we bound the max size of pseudo-points in each Quadtree leaf node to be 50 for efficient insertion and update of the pseudo-points. In Figure \ref{fig:mit_csail}, we show the \textit{TSDF} map of the central agent and agent 3 on MIT CSAIL and the estimated error with with respect to the centralized agent in Fig. \ref{fig:metric_csail}, and refer the readers to Appendix \ref{radish-results} for a complete list of our results of the agents, as well as on the other datasets.

\subsection{NCLT Dataset}

  The North Campus Long Term Dataset \cite{nclt} is a large-scale dataset both spatially and temporally. It covers roughly a square kilometer, includes challenging weather conditions and moving objects, and is collected over the time span of serveral months. The dataset provides 2D and 3D Lidar scans of the robot, camera images, odometry readings, as well as GPS positions of the robot. In our evaluation, we used the GPS readings as the robot's trajectory, and 2D Lidar scans as observations. We selected $10$ sequences from different dates where a different trajectory and observations are collected. 

For the Gaussian Process, the following GP parameters are used: $\sigma = 0.1, c=1.0, l=0.2$. A grid size of $0.25$ is used as the space covered is an order of magnitude larger than that in the Radish datasets. We use $5$ meters as the truncation value for TSDF, i.e. $\mu_{0} = h = 5$.

% Discussion on the RMSE and number of pseudo-points. 
We show the TSDF estimates of the central agent, and agent 1 and 10 in Fig.\ref{fig:tsdf_nclt}. The RMSE and number of pseudo-points are shown in Fig.\ref{metric_nclt}, with communication range of $100m$. The results for the rest of the agents are similar, and are included in Appendix \ref{nclt-results}. When the robots are in isolated environments (e.g. robot 2 and 3 in the first 1500 timestamps), the error with respect to the central agent stagnated. However, when the robots rendez-vous and exchange information, the pseudo-point statistics are exchanged, and the number of pseudo-points increase sharply, while the error drops sharply at these moments of meetings. During some parts of the run the RMSE increases because the robot team continues to explore the environment, and may not have neighbors to exchange information. However, over the time-span $T$ the TSDF estimate of each individual agent continues getting closer to that of the central agent asymptotically.

% As t increases, robots explore new parts of the environment, and the \textit{RMSE} increases originally as the newly collected GP statistics are not propagated through the network. However, as the robots rendez-vous and exchange information when they are in communication range, the pseudo-point statistics are exchanged, and the number of pseudo-points increase sharply, while the error drops sharply at these moments of rendez-vous. The errors continue to drop as the robots communicate, and over the time-span $\tau$ the TSDF estimate of each individual agent gets closer to that of the central agent asymptotically.

% Discussion on the effects of communication range of the results.
In Table \ref{table:dist_thresh} we show the statistics corresponding to different values of $r$.  Larger value of $r$ corresponds to the robots have larger communication range, and hence higher likelihood a larger subset of agents to exchange information with at each time $t$. As $r$ increases, the mean and std of the RMSE both decrease at the end of the run. The number of pseudo-points and number of leaves both increase, as each robot has received more pseudo-points from a larger number of robots. 

\begin{table}[t]
    \centering
    \caption{Table over different values of communication range. Each entry is the mean and standard deviation computed over the 10 agents, at the end of the run. RMSE is computed according to \eqref{eq:rmse}.}
    \label{table:dist_thresh}
    \begin{tabular}{|c|c|c|c|}
    \hline
     Range (m)& RMSE(m) & \# Pseudo-points & \# Leaves\\
    \hline
    50 & $1.65 \pm 0.17$ & $627684 \pm 26247$ & $31372 \pm 1036$ \\ 
    \hline
    100 & $0.74 \pm 0.09$ & $712459 \pm 4777$ &  $ 34837 \pm 166$  \\
    \hline
    200 & $0.10 \pm 0.02$ & $719725 \pm 22$ & $35056 \pm 3.56$\\
    \hline
\end{tabular}
\end{table}

\section{Conclusion}
In summary, in this work we proposed a distributed, probabilistic, online and efficient algorithm for TSDF mapping using
robot team over time-varying communication graph. Then we provided theoretical guarantee for its consensus. We practically evaluated our method with large scale experiments on real world data sets. The directions that could be taken in future works includes Incorporating robot pose-estimation algorithm into the algorithm, and achieve a fully distributed $\textit{SLAM}$ system in real-time, and Utilizing the covariance of the GP estimates for down-stream tasks such as collision avoidance and motion planning.

% \clearpage
% \section{Acknowledgement}
\bibliographystyle{ieeetr}
\bibliography{root}

% \pagebreak
\clearpage
\onecolumn
\appendices
\section{Results for Radish Datasets}
\label{radish-results}
\begin{figure}[h!]
     \centering
     \begin{subfigure}{1.0\textwidth}
        \centering
         \includegraphics[trim={5cm, 2cm, 5cm, 3cm}, clip, width=0.89\linewidth]{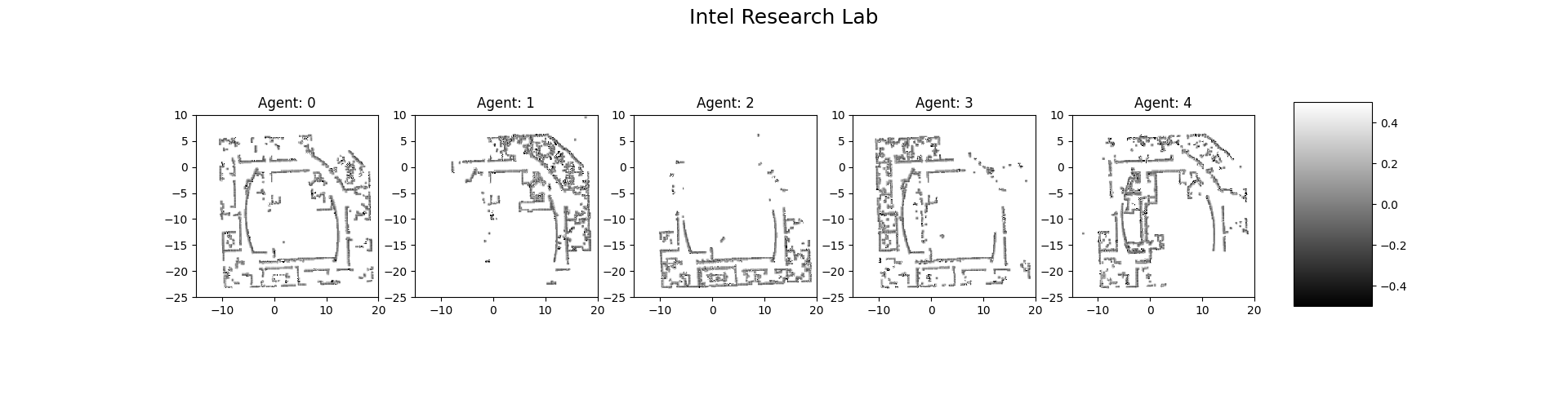}
         \includegraphics[trim={5cm, 2.5cm, 5cm, 3cm}, clip, width=0.89\linewidth]{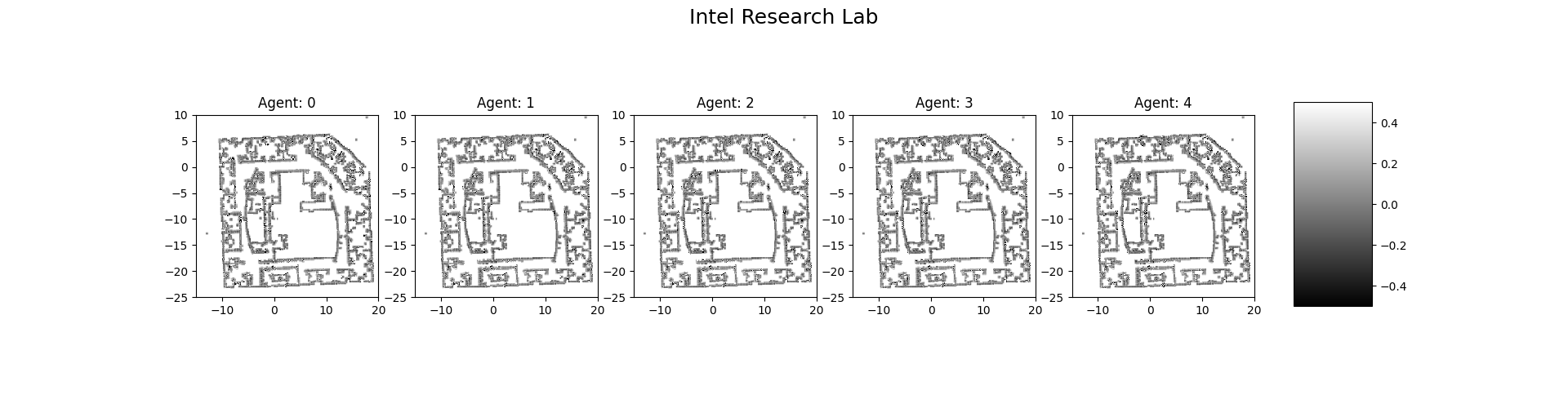}
     \caption{Intel Research Lab}
     \end{subfigure}
     
     \vspace{5pt}
     
     \begin{subfigure}{1.0\textwidth}
         \centering
         \includegraphics[trim={5cm, 2cm, 5cm, 2cm}, clip, width=0.89\linewidth]{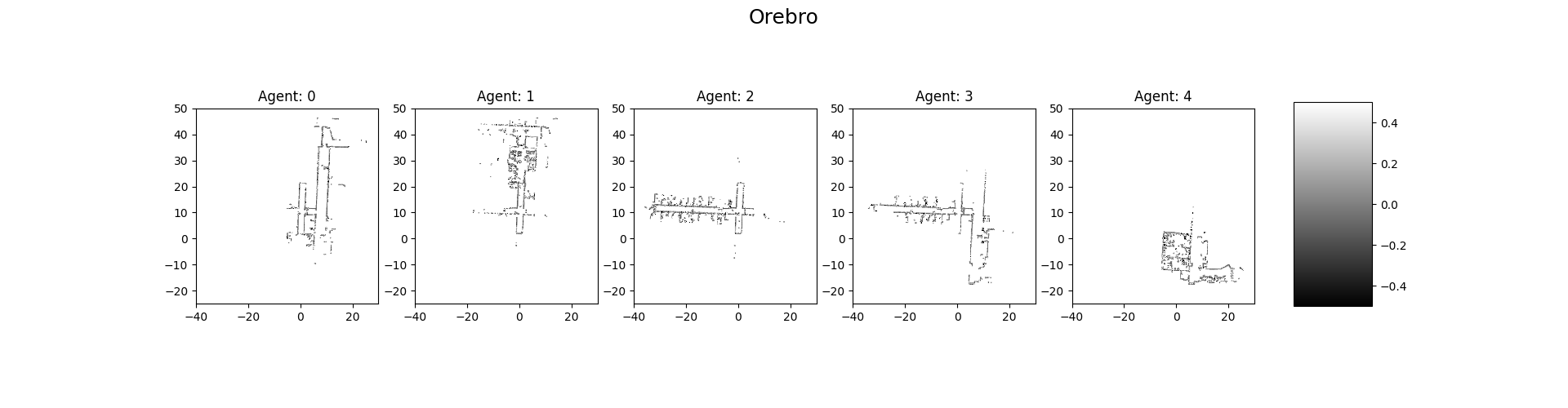}
         \includegraphics[trim={5cm, 2.5cm, 5cm, 2.5cm}, clip, width=0.89\linewidth]{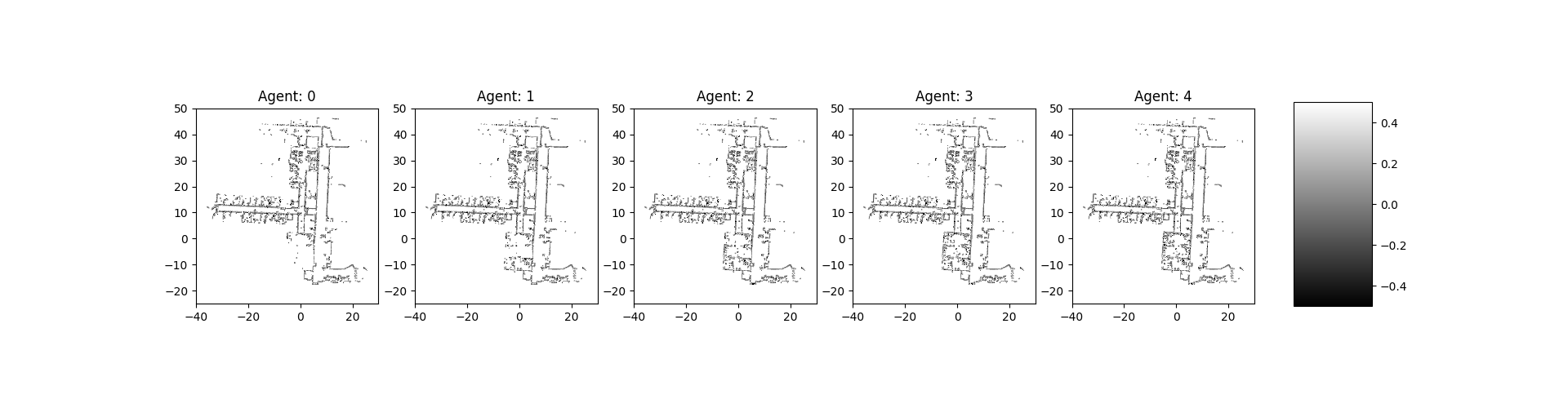}
     \caption{Orebro}
     \end{subfigure}
     
     \begin{subfigure}{1.0\textwidth}
          \centering
          \includegraphics[trim={5cm, 2cm, 5cm, 1cm}, clip, width=0.89\linewidth]{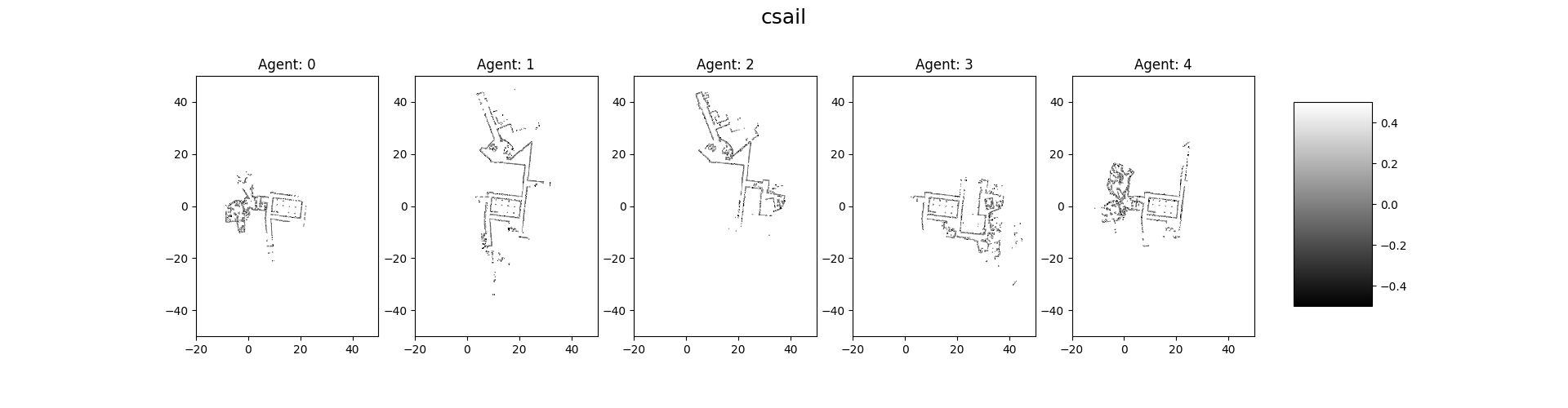}
         \includegraphics[trim={5cm, 1.5cm, 5cm, 1cm}, clip, width=0.89\linewidth]{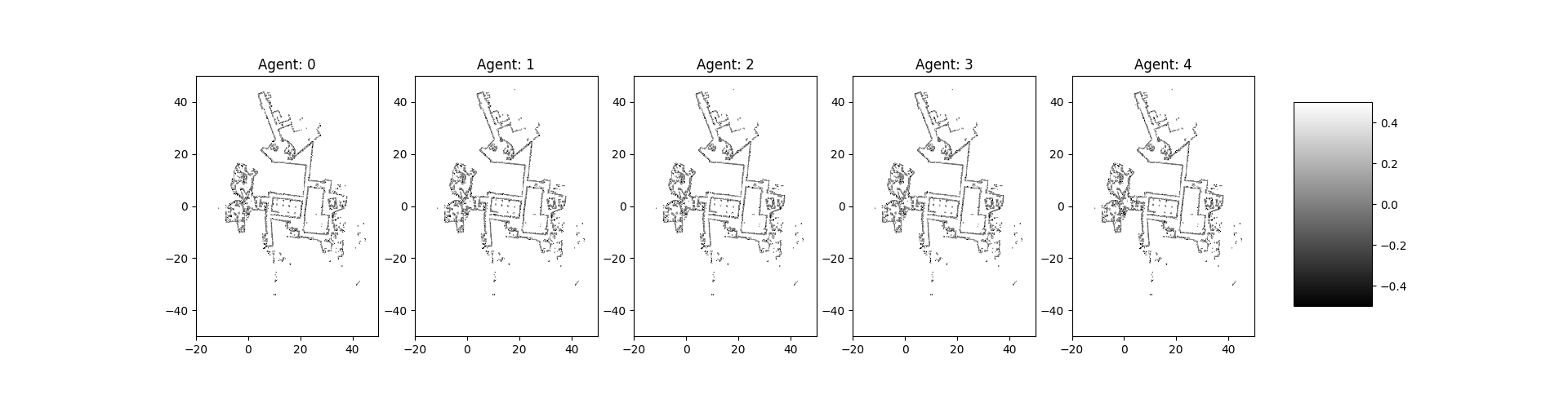}
     \caption{MIT CSAIL}
     \end{subfigure}
\end{figure}

\clearpage
\section{Results for North Campus Long-Term Dataset}
\label{nclt-results}
\begin{figure}[h!]
    \begin{subfigure}{0.33 \textwidth}
         \includegraphics[trim={6cm, 3cm, 6cm, 3cm}, clip, width=\linewidth]{Figures/NCLT/tsdf/agent1_tsdf.png}
    \subcaption{Agent 1}
    \end{subfigure}
    \begin{subfigure}{0.33 \textwidth}
         \includegraphics[trim={6cm, 3cm, 6cm, 3cm}, clip, width=\linewidth]{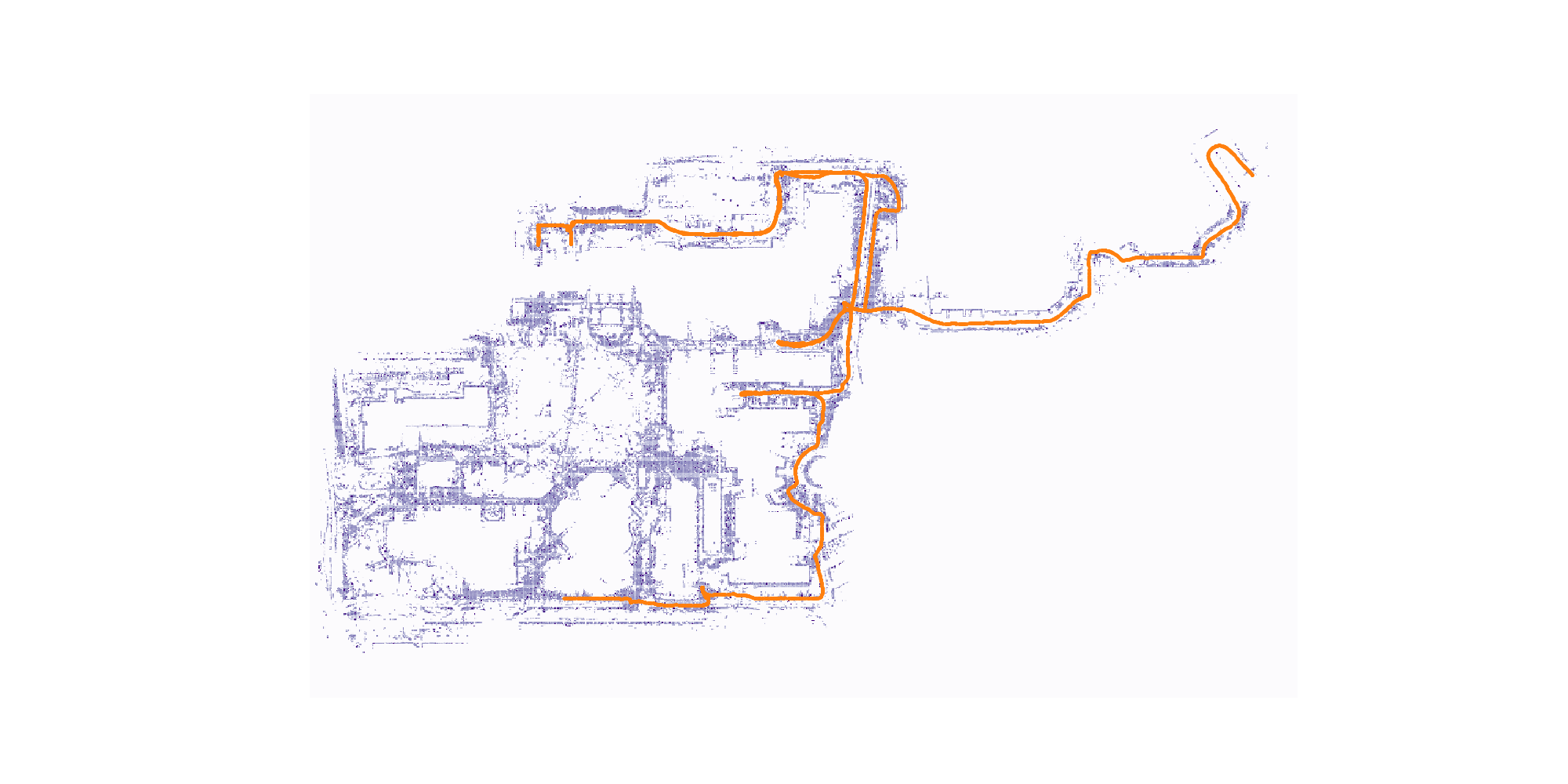}
    \subcaption{Agent 2}
    \end{subfigure}
    \begin{subfigure}{0.33 \textwidth}
         \includegraphics[trim={6cm, 3cm, 6cm, 3cm}, clip, width=\linewidth]{Figures/NCLT/tsdf/agent3_tsdf.png}
    \subcaption{Agent 3}
    \end{subfigure}
    \begin{subfigure}{0.33 \textwidth}
         \includegraphics[trim={6cm, 3cm, 6cm, 3cm}, clip, width=\linewidth]{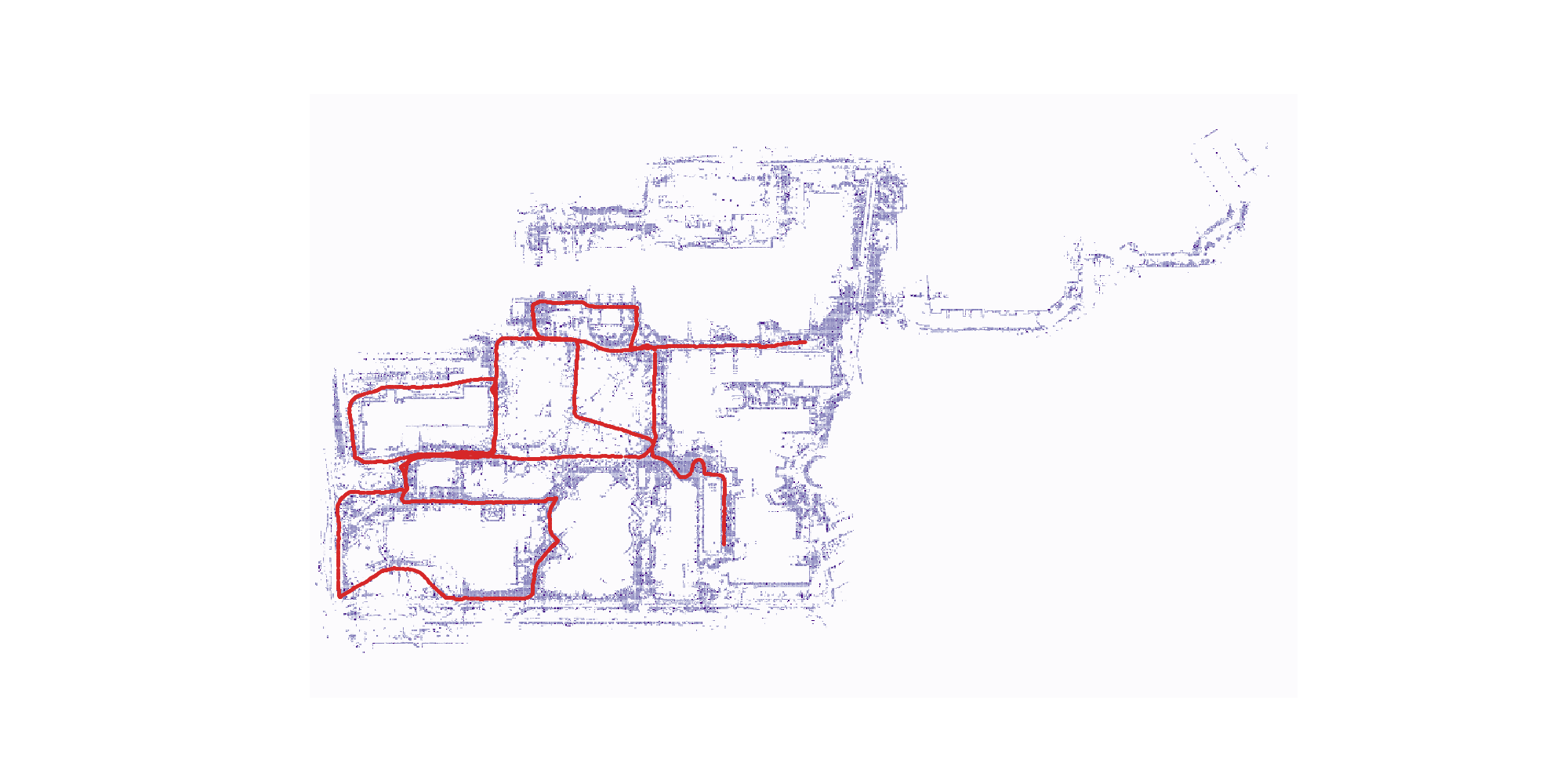}
    \subcaption{Agent 4}
    \end{subfigure}
    \begin{subfigure}{0.33 \textwidth}
         \includegraphics[trim={6cm, 3cm, 6cm, 3cm}, clip, width=\linewidth]{Figures/NCLT/tsdf/agent5_tsdf.png}
    \subcaption{Agent 5}
    \end{subfigure}
    \begin{subfigure}{0.33 \textwidth}
         \includegraphics[trim={6cm, 3cm, 6cm, 3cm}, clip, width=\linewidth]{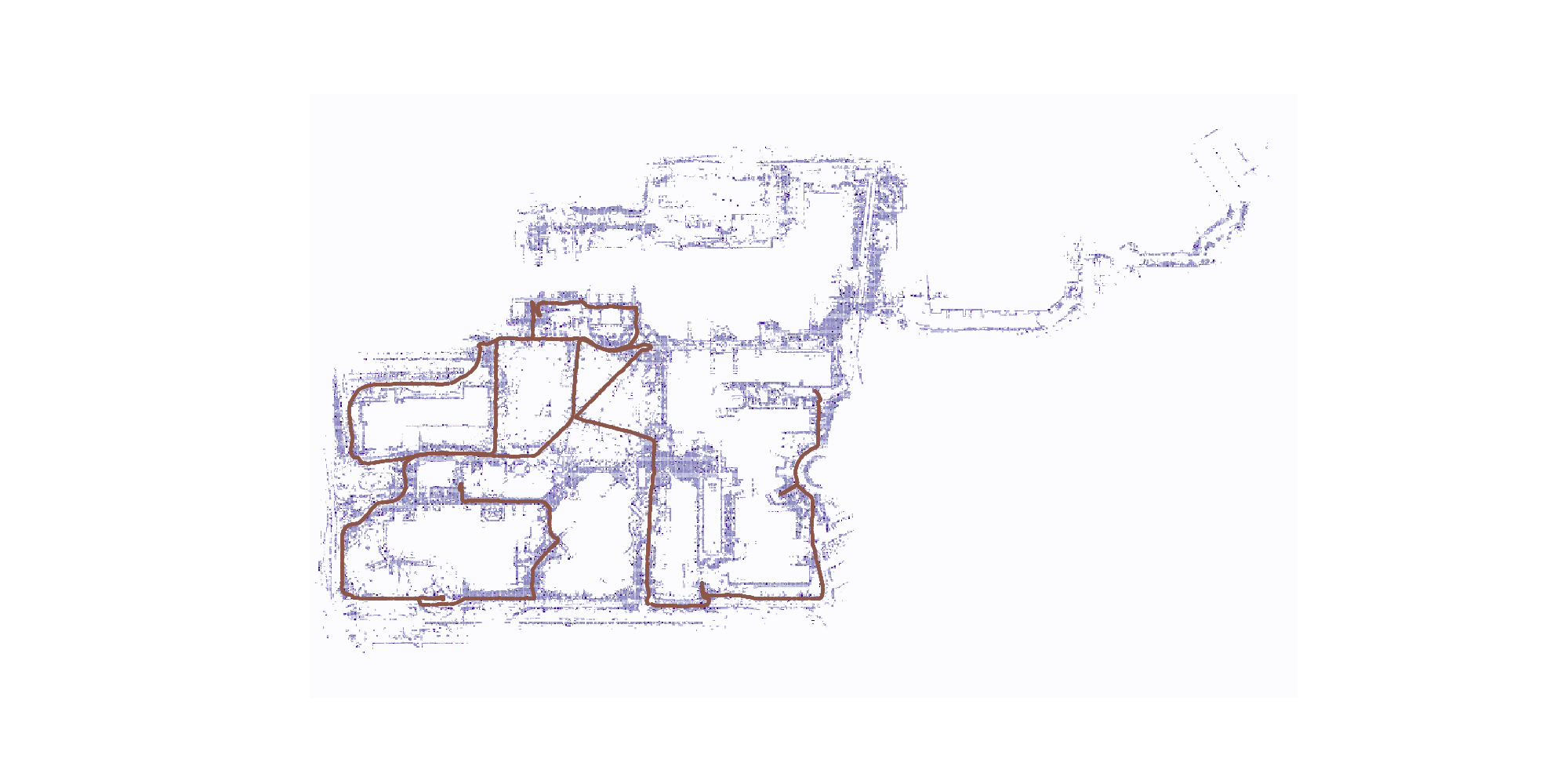}
    \subcaption{Agent 6}
    \end{subfigure}
    \begin{subfigure}{0.33 \textwidth}
         \includegraphics[trim={6cm, 3cm, 6cm, 3cm}, clip, width=\linewidth]{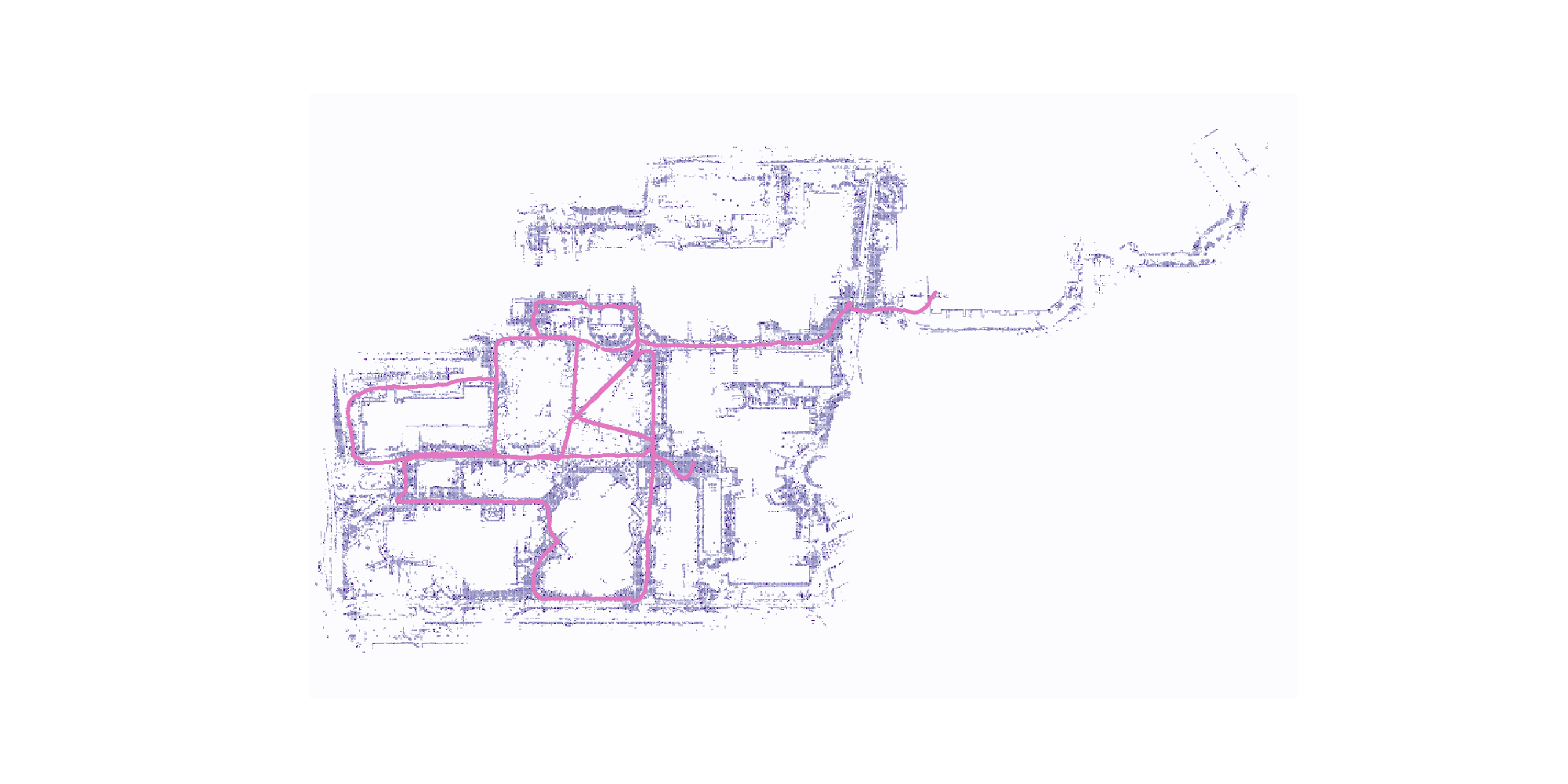}
    \subcaption{Agent 7}
    \end{subfigure}
    \begin{subfigure}{0.33 \textwidth}
         \includegraphics[trim={6cm, 3cm, 6cm, 3cm}, clip, width=\linewidth]{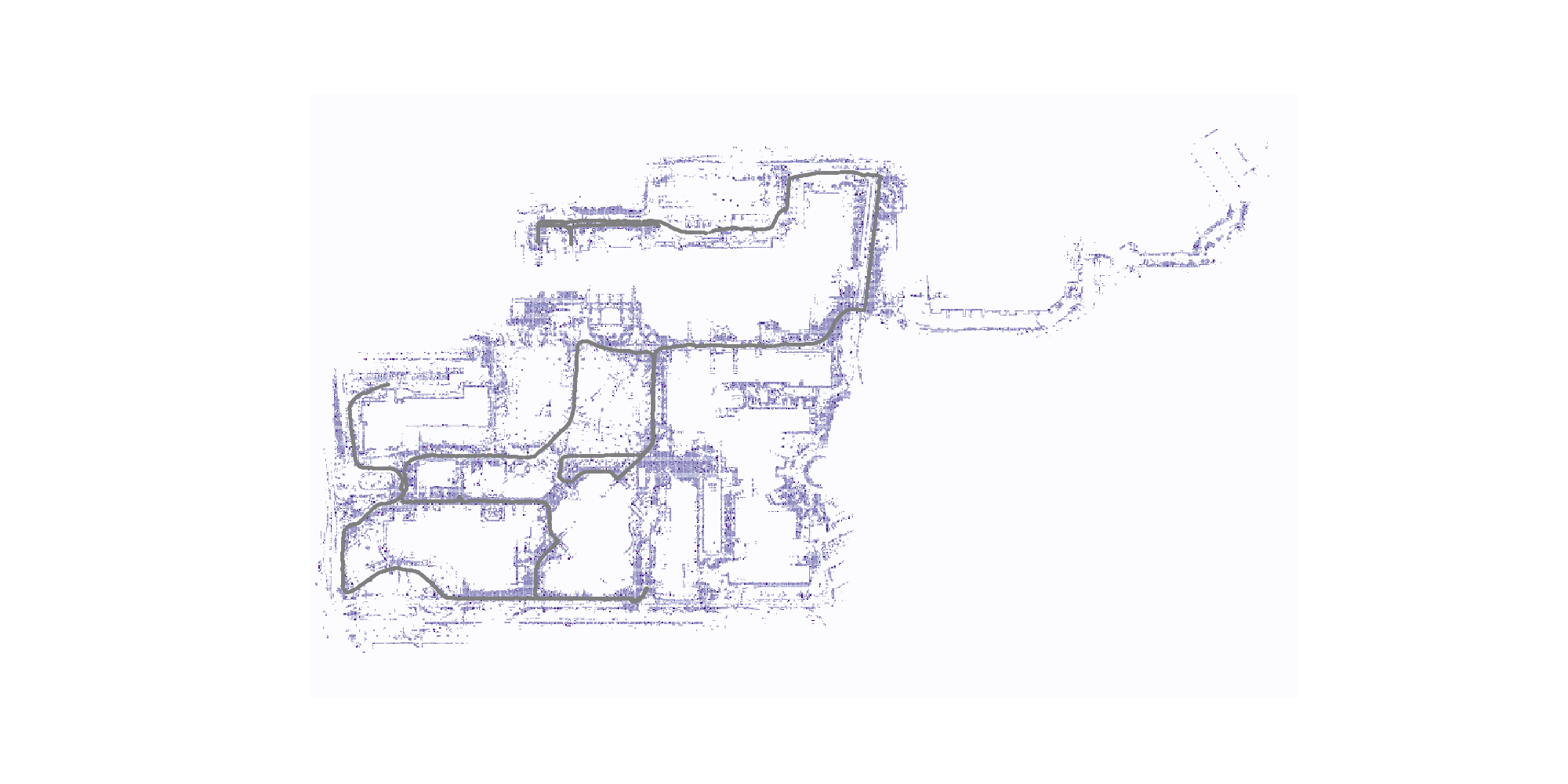}
    \subcaption{Agent 8}
    \end{subfigure}
    \begin{subfigure}{0.33 \textwidth}
         \includegraphics[trim={6cm, 3cm, 6cm, 3cm}, clip, width=\linewidth]{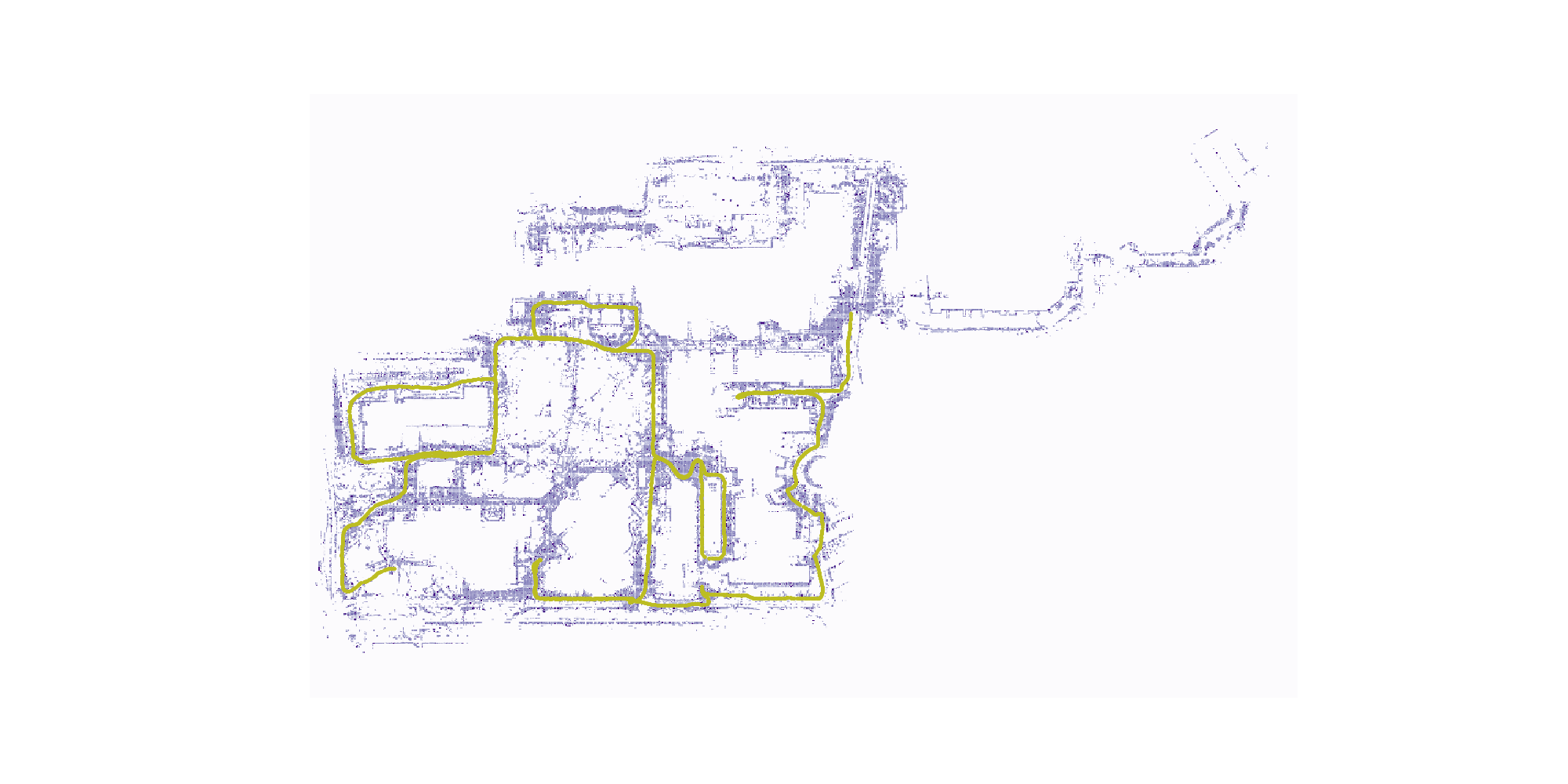}
    \subcaption{Agent 9}
    \end{subfigure}
    \begin{subfigure}{0.33 \textwidth}
         \includegraphics[trim={6cm, 3cm, 6cm, 3cm}, clip, width=\linewidth]{Figures/NCLT/tsdf/agent10_tsdf.png}
    \subcaption{Agent 10}
    \end{subfigure}
\end{figure}

% \begin{figure*}[h!]
%      \centering
%      \includegraphics[trim={0cm, 0cm, 0cm, 1cm}, clip, width=0.8\linewidth]{}
%      \caption{Metrics Over Time}
%      \label{trajectory for evaluation}
% \end{figure*}

%%%%%%%%%%%%%%%%%%%%%%%%%%%%%%%%%%%%%%%%%%%%%%%%%%%%%%%%%%%%%%%%%%%%%%%%%%%%%%%%

%%%%%%%%%%%%%%%%%%%%%%%%%%%%%%%%%%%%%%%%%%%%%%%%%%%%%%%%%%%%%%%%%%%%%%%%%%%%%%%%

%%%%%%%%%%%%%%%%%%%%%%%%%%%%%%%%%%%%%%%%%%%%%%%%%%%%%%%%%%%%%%%%%%%%%%%%%%%%%%%%

\end{document}